\newcommand{\N}{\mbox{$I\!\!N$}}
\newcommand{\Outfamily}{\mbox{\it Outfamily}}
\newcommand{\C}{\ensuremath{C}}
\newcommand{\D}{\ensuremath{C}}
\begin{document}

\mainmatter  

\title{Prioritized Norms  in Formal Argumentation} 

\titlerunning{Prioritized Norms  in Formal Argumentation}

%
%
\author{Beishui Liao\inst{1,3}, Nir Oren\inst{2}, Leendert van der Torre\inst{3} and Serena Villata\inst{4} } 
%
\authorrunning{B. Liao N. Oren,  L. van der Torre and S. Villata} 

\institute{
Zhejiang University, China
\and
University of Aberdeen, United Kingdom
\and
University of Luxembourg, Luxembourg
\and
CNRS, I3S Laboratory, France
}

%
%

\toctitle{Lecture Notes in Computer Science}
\tocauthor{Authors' Instructions}
\maketitle

  
 \begin{abstract}
To resolve conflicts among norms, various nonmonotonic formalisms can be used to perform prioritized normative reasoning.   
Meanwhile, formal argumentation provides a way to represent nonmonotonic logics. 
In this paper, we  propose a representation of prioritized normative reasoning by argumentation. Using hierarchical abstract normative systems, we define three kinds of prioritized normative reasoning approaches, called \textit{Greedy}, \textit{Reduction}, and \textit{Optimization}. 
Then,  after formulating an argumentation theory for a hierarchical abstract normative system, we show that for a totally ordered hierarchical abstract normative system, Greedy and Reduction can be represented in argumentation by applying the weakest link and the last link principles respectively, and Optimization can be represented by introducing additional defeats capturing the idea that for each argument that contains a norm not belonging to the maximal obeyable set then this argument should be rejected. 

\end{abstract}



\section{Introduction}
Since the work of Alchourr\'{o}n and Makinson~\cite{alchourron} on hierarchies of regulations and their logic, in which a partial ordering on a code of laws or regulations is used to overcome logical imperfections in the code itself,  
reasoning with prioritized norms has been a central challenge in deontic logic~\cite{Hansen06,DBLP:conf/dagstuhl/BoellaTV08,parentvandertorre2013}. 

The goal of this paper is to study the open issue of reasoning with priorities over norms through the lens of argumentation theory~\cite{DBLP:journals/ai/Dung95}. 
More precisely, we focus on reasoning with the abstract normative system proposed by Tosatto \textit{et al.}~\cite{DBLP:conf/kr/TosattoBTV12}, which in turn is based on Makinson and van der Torre's approach to input/output logic~\cite{DBLP:journals/jphil/MakinsonT00}. In this system, an abstract norm is represented by an ordered pair $(a, x)$, where the body of the norm $a$ is thought of as an input, representing some kind of condition or situation, and the head of the norm $x$ is thought of as an output, representing what the norm tells us to be obligatory 
in that situation $a$. As a consequence, an abstract normative system is a directed graph $(L, N)$ together with a context $\D \subseteq L$ capturing base facts, where $L$ is a set of nodes, and $N\subseteq L\times L$ is the set of abstract norms. When the edge of an abstract normative system is associated with a number to indicate its priority over the other norms in the system, we obtain a hierarchical abstract normative system (HANS), which will be formally defined and studied in the remainder of this paper.  

Let us clarify how a hierarchical abstract normative system is defined by considering the  well known Order Puzzle \cite{Horty2007} example from the deontic logic literature, which revolves around three norms.









\begin{quote}
``Suppose that there is an agent, called Corporal O'Reilly, and that he is subject to the commands of three superior officers: a Captain, a Major, and a Colonel. The Captain, who does not like to be cold, issues a standing order that, during the winter, the heat should be turned on. The Major, who is concerned about energy conservation, issues an order that, during the winter, the window should not be opened. And the Colonel, who does not like to be too warm and does not care about energy conservation, issues an order that, whenever the heat is on, the window should be opened.'' 
\end{quote}
\begin{figure}[h!]
  \centering
\includegraphics[width=0.88\textwidth]{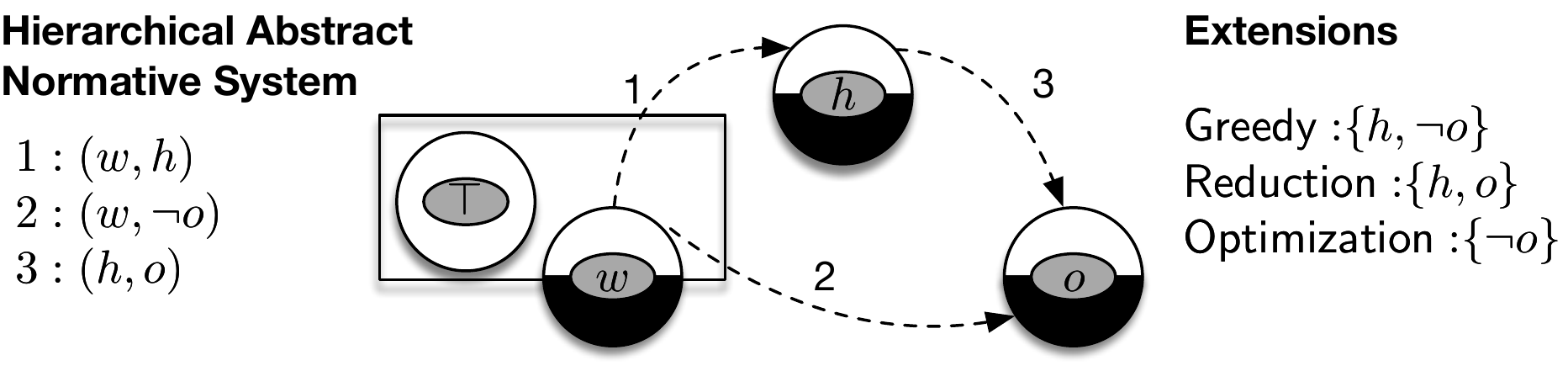}
    \caption{The Order puzzle example, represented using the graphical notation of Tosatto \textit{et al}.~ \cite{DBLP:conf/deon/TosattoBTV12} with edges annotated by norm strength.}
    \label{fig:ex-1}
\end{figure}

Let  $w$, $h$ and $o$ respectively denote the propositions that it is winter, the heat is turned on, and the window is open. There are three norms 
$(w, h)$, $(w, \neg o)$ and $(h, o)$. 

These three norms are  visualized in Figure~\ref{fig:ex-1}, extending the graphical notation described in Tosatto \textit{et al}. \cite{DBLP:conf/deon/TosattoBTV12} by associating edges with numbers  denoting priorities of norms. These priorities are obtained from the rank of the issuer, since Colonels outrank Majors, and Majors outrank  Captains. Within the figure, each circle denotes a proposition; the light part of the circle is the proposition itself, while the dark part denotes a negated proposition. Dashed lines represent the conditional obligations. 
Within Figure~\ref{fig:ex-1}, the line from the light part of $w$ to the dark part of $o$ denotes $(w, \neg o)$. The box on the left represents the context, in the example containing $\top$ and $w$.

In formalizing examples, one problem in applied logic is that the representation may be challenged. For example, it may be argued that the Colonel implies that if the window is closed, then the heating should be turned off. However, in normative systems, such pragmatic considerations are usually not part of the detachment procedure \cite{Risto2013}, with only explicitly given norms and commands being considered. Therefore, any such additional interpretations or other pragmatic concerns are out of the scope of this paper.

The central notion of inference in normative systems is called \emph{detachment}. For example, in the Order Puzzle, the fundamental question is whether we can detach $o$, $\neg o$, or both. In the example, the formulas which can be derived from a normative system are obligations. In general, permissions and institutional facts can also be detached from normative systems. 
A detachment procedure therefore defines the way deontic facts are derived from a normative system. Different detachment procedures have been defined and studied in deontic logic, as well as in other rule based systems. Moreover, even in hierarchical normative systems, not all conflicts may be resolved. In such a case, the detachment procedure may derive several so-called \emph{extensions}, each representing a set of obligations, permissions and institutional facts.

Abstract normative systems \cite{DBLP:conf/kr/TosattoBTV12} were introduced as a common core representation for normative systems, which are still expressive enough to define the main detachment procedures. In particular, analogous to the main input/output logics,  they have factual detachment built in, and have reasoning by cases, deontic detachment and identity as optional inference patterns \cite{DBLP:conf/icail/BoelleT03,Risto2013}. Such systems are called  `abstract', because negation is the only logical connective that is defined in the language. Furthermore, Tosatto \textit{et al}.~\cite{DBLP:conf/kr/TosattoBTV12} considered elements and anti-elements rather than literals and propositions. It is straightforward to define more connectives within such systems, and it is also possible to define structured normative systems where the abstract elements are instantiated with logical formulas, for example with formulas of a propositional or modal logic. The latter more interesting representation of logical structure is analogous to the use of abstract arguments in formal argumentation. An advantage of abstract normative systems over structured ones is that the central inference of detachment can be visualized by walking paths in the graph. In other words, inference is represented by graph reachability. For example, in the Order puzzle, node $o$ is reachable from the context, and thus it can be detached. Moreover, a conflict is represented by a node where both its light and the dark side are reachable from the context, as in the case of node $o$ in Figure 1. 

There are several optional inference patterns for abstract normative systems, because, as is well known, most principles of deontic logic have been criticized for some examples and applications. However, the absence of the same inference patterns is often criticized as well due to the lack of explanations and predictions of the detachment procedures resulting from these patterns. Therefore, current approaches to represent and reason with normative systems, such as input/output logic as well as abstract normative systems, do not restrict themselves to a single logic, but define a family of logics which can be embedded within them. Deciding which logic to use in a specific context depends on the requirements of the application. Similarly, with regards to permissions, there is an even larger diversity of deontic logics \cite{hansson2013} which adopt  different representations. For example, for  each input/output logic, various notions of permission have been defined, in terms of their relation to obligation.  We refer to the work of Parent and van der Torre \cite{parentvandertorre2013} for further explanation, discussion and motivation of this topic.


Now let us consider how various  detachment procedures might apply  norms  in hierarchical normative systems in different orders, and  result in different outcomes or extensions. Since detachment operates differently over permissions and obligations, and since the former introduces significant extra complexity, 
we only consider regulative norms and obligation detachment procedures in this paper.
We examine three approaches describing well known procedures defined in the literature 
\cite{Young2016,DBLP:conf/ijcai/Brewka89,hansen:makbook}.  These  procedures all share  one important property, namely that the context itself is not necessarily part of the procedure's output. It is precisely this feature which distinguishes input/output logics from traditional rule based languages like logic programming or default logic \cite{DBLP:journals/jancl/PrakkenS97,DBLP:conf/aaai/Brewka94}. Such traditional rule based languages where the input is part of the output are called throughput operators in input/output logic research. Therefore, the procedures we consider can naturally be captured using input/output logics. We refer to the three approaches as Greedy, Reduction and Optimization.

\textit{Greedy}: The context contains propositions that are known to hold.  This procedure always applies the norm with the highest priority that does not introduce inconsistency to an extension and the context. Here we say that a norm is applicable when its body is in the context or has been produced by other norms and added to the extension.  In the Order puzzle  example, we begin with  the context $\{w\}$, and $(w, \neg o)$ is first applied. Then $(w,h)$ is applied. Finally,  $(h,o)$ cannot be applied as this would result in a conflict, and so,  by using Greedy, we obtain the extension $\{h,\neg o \}$. 

\textit{Reduction}: in this approach, a candidate extension is guessed. All norms that are applicable according to this candidate extension are selected and transformed into unconditional or body-free norms. For example, a norm $(a,b)$ selected in this way is transformed to a norm $(\top,b)$. The modified hierarchical abstract normative system, with the transformed norms is then evaluated using  Greedy. The candidate extension is selected as an extension by  Reduction  if it is identified as an extension according to this application of Greedy.
In our example, we select a candidate extension  $\{h, o\}$,  obtaining a set of body-free norms $\{(\top, h), (\top, \neg o), (\top, o)\}$. The priorities assigned to these norms are carried through from the original hierarchical abstract normative system, and are therefore respectively 1, 2 and 3.  After applying Greedy, we get an  extension of Reduction: $\{h, o\}$. However, if we had selected the candidate extension $\{h, \neg o\}$, then this new extension would not appear in Greedy  as $(\top, \neg o)$ has a lower priority than $(\top, o)$, and the latter is therefore not an extension of Reduction.

\textit{Optimization}:  In terms of  Hansen's prioritized conditional imperatives, a set of maximally \emph{obeyable} (i.e., minimally violated) norms is selected by choosing norms in order of priority which are consistent with the context. Once these norms are selected,  Greedy is applied to identify the extension. In our example, the maximal set of obeyable norms is $\{(h, o), (w, \neg o)\}$.   Optimization therefore detaches the unique extension $\{\neg o\}$.  
We can also consider the example in terms of formal argumentation.  Given a hierarchical abstract normative system, we may construct an argumentation framework. The top part of Figure~\ref{fig:ex-2} illustrates the argumentation framework obtained for the Order puzzle which does not consider the priority relation between arguments.


An argumentation framework is a directed graph in which nodes denote arguments, and edges denote attacks between arguments. In the setting of a hierarchical abstract normative system, an argument is represented  as a path within the directed graph starting from a node in the context.  In this example, there are four arguments $A_0, A_1, A_2$ and $A_3$, represented as $[w], [(w, h)], [(w, h), (h,o)]$ and $[(w,\neg o)]$, respectively. Since the conclusions of $A_2$ and $A_3$ are inconsistent, $A_2$ attacks $A_3$ and vice-versa. To obtain correct conclusions, we must take priorities between arguments into account, transforming attacks into \emph{defeats}. This in turn requires \emph{lifting} the priorities given for the constituents of an argument to a priority for the argument itself, and two different principles have been commonly used for such a lifting. 
The \emph{last link} principle ranks an argument  based on the strength of its last inference, while the \emph{weakest link} principle ranks an argument based on the strength of its weakest inference. In the order example, if the {last link} principle is applied, then $[(w, h), (h,o)]$ defeats $[(w,\neg o)]$. The corresponding argumentation framework is illustrated in the middle portion of Figure \ref{fig:ex-2}. If the {weakest link} principle is used instead, then $[(w,\neg o)]$ defeats $[(w, h), (h,o)]$. The corresponding argumentation framework is illustrated at the bottom of Figure \ref{fig:ex-2}. As a result, the former principle allows us to conclude $\{h, o\}$, while the latter  concludes $\{h,\neg o\}$. In turn, the first result coincides with that obtained by Reduction, while the second is the same as that obtained by Greedy.

\begin{figure}
  \centering
  \fbox{
\includegraphics[width=0.55\textwidth]{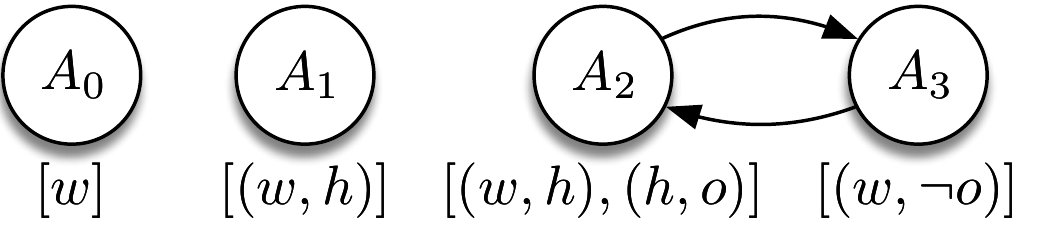}}
\fbox{
\includegraphics[width=0.55\textwidth]{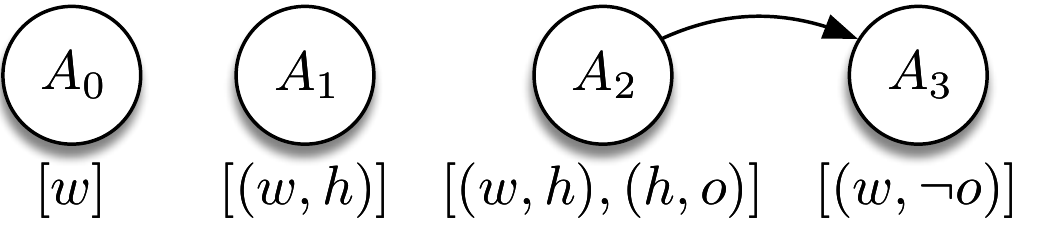}}
\fbox{
\includegraphics[width=0.55\textwidth]{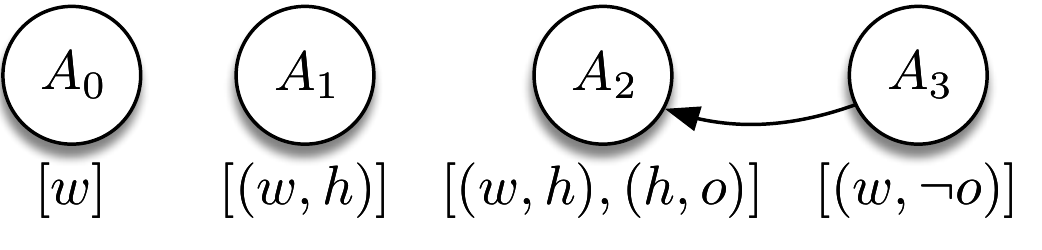}}

    \caption{Top: the argumentation framework obtained from the order puzzle hierarchical normative system with the four arguments and attacks between them visualized as directed arrows. Middle: the argumentation framework obtained when last link is applied. Bottom: the argumentation framework obtained when weakest link is applied.}
    \label{fig:ex-2}
\end{figure}



Inspired by the example  above, we wish to investigate the links between the three detachment procedures for prioritized normative reasoning and argumentation theory. More specifically, our main research question is as follows.

\begin{quote}
\emph{
How can the three detachment procedures (Reduction, Greedy, and Optimization) proposed in the context  of abstract normative reasoning be represented in formal argumentation?}
\end{quote}

To answer this research question, we propose a formal framework to connect hierarchical normative reasoning with argumentation theory. 
More precisely, our framework represents the above-mentioned detachment procedures by lifting priorities from norms to arguments, 
with the underlying goal of making as few commitments as possible to specific argumentation systems. For this reason, we build on a structured argumentation framework which admits undercuts and rebuts between arguments, and allows for priorities between norms making up arguments. We show that variants of approaches to lifting priorities from rules to arguments allow us to capture both Greedy and Reduction, while the introduction of additional defeats  allows us to obtain 
Optimization. 

Some preliminary results found in this paper were originally presented in DEON 2016 \cite{bnls2016}. We keep, but extend, the representation results for Greedy and Reduction (previously called the ``Brewka-Eiter construction''). To capture  Optimization (previously called ``Hansen's construction'')  the DEON 2016 paper adds new arguments through the introduction of additional  permissive norms. In the current paper, this is replaced with an approach which introduces additional defeats. The reasons for this replacement are two-fold. First, the preliminary approaches for Optimization from the earlier work \cite{bnls2016} cannot fully capture the idea that  each argument containing a norm not belonging to the maximal obeyable set should be rejected. Second, permissive norms were used as a means to support the representation, which making the approach more complicated. 



The layout of the remainder of the paper is as follows.  Section 2 formalizes the above-mentioned three detachment procedures of hierarchical normative reasoning (i.e., Greedy,  Reduction, and Optimization). In Section 3, we introduce an argumentation theory for a hierarchical abstract normative system. Sections 4, 5 and 6  show how Greedy, Reduction and Optimization can be represented in argumentation. Finally, in Section 7 we discuss open problems and compare the proposed approach with related work, and in Section 8 we point out possible directions for future work.

\section{Hierarchical abstract normative systems} \label{sect-pns}
In this section, we formally introduce the notion of hierarchical abstract normative systems and the three detachment procedures to compute their normative conclusions. A hierarchical abstract normative system captures the context of a system and the norms in force in such a system. 
There is an element in the universe called $\top$, contained in every context. In this paper, we consider only a finite universe. A hierarchical abstract normative system also encodes a ranking function over the norms to allow for the resolution of conflicts.

Based on the notion of \textit{abstract normative system} defined by Tosatto \textit{et al}.~\cite{DBLP:conf/kr/TosattoBTV12}, a hierarchical abstract normative system can be defined as follows. 

\begin{definition}[Hierarchical abstract normative system] \label{def-pans}
A hierarchical abstract normative system is a tuple $\mathcal{H} = \langle L, N, \C, r\rangle$, where
\begin{itemize}
\item $L = \mathrm{E} \cup \{\neg e \mid e\in \mathrm{E}\} \cup \{\top\}$ is the universe, a set of literals based on some finite set $\mathrm{E}$ of atomic elements;
\item $N\subseteq L\times L$ is a  finite set of regulative  norms;
\item $\C \subseteq L$ is a subset of the universe, called a context, such that $\top \in \C$ and for all $e$ in $\mathrm{E}$, $\{e, \neg e\} \not\subseteq \C$;
\item $r: N \rightarrow \N$ is a function from  norms to  natural numbers.
\end{itemize}
\end{definition}

Regulative (ordinary) norms are of the kind ``if you turn on the heat, then you should open the window''. 
These norms  are \textit{conditional norms}, requiring some condition to  hold (e.g., turning on the heat) before their conclusions can be drawn.

We write $(a,x)$ for a regulative norm, 
where $a, x\in L$ are the antecedent and conclusion of the norm, respectively. 
Given $(a,x)$, we use $r(a,x)$ to denote $r((a,x))$. Let $u, v\in N$ be two norms, we say that $v$ is at least as preferred as $u$ (denoted $u\le v$) if and only if $r(u)$ is not larger than $r(v)$ (denoted $r(u)\le r(v)$), where $r(u)$ is also called the rank of $u$. We write $u< v$ or $v > u$ if and only if $u\le v$ and $v\not\le u$. Given a norm $u = (a, x)$, we write $\mbox{\it ant}(u)$ for $a$ to represent the antecedent of the norm, and $\mbox{\it cons}(u)$ for $x$ to represent the consequent of the norm. Given a set of norms $S\subseteq N$, we use $\mbox{\it cons}(S)$ to denote $\{\mbox{\it cons}(u) \mid u\in S\}$. We say that a hierarchical abstract normative system is totally ordered if and only if the ordering $\le$ over $N$ is antisymmetric, transitive and total. Due to the finiteness of universe, the set of norms is finite. Note that given this assumption, the notion of total ordering here is identical to that of the full prioritization in Brewka and Eiter's~\cite{Brewka} and Hansen's~\cite{DBLP:journals/aamas/Hansen08} work, and of the linearized ordering of Young \textit{et al}. \cite{Young2016}.   
For $a\in L$, we write $\overline{a}=\neg a$ if and only if $a\in \mathrm{E}$, and $\overline{a}=e$ for $e\in \mathrm{E}$ if and only if $a=\neg e$. For a set $S \subseteq L$, we say 
 that $S$ is consistent if and only if there exist no $e_1, e_2\in S$ such that $e_1 = \overline{e_2}$. 
To exemplify the notions of a hierarchical abstract normative system, consider the following example. 

\begin{example}[Order puzzle]\label{pans-1}
In terms of Definition \ref{def-pans}, the set of norms and priorities that are visualized in Figure \ref{fig:ex-1} can be formally represented as a hierarchical abstract normative system $\mathcal{H} = \langle L, N$, $\C$, $r\rangle$, where 
	 $L = \{w, h, o, \neg w, \neg h, \neg o, \top\}$, $N  = \{(w,h), (h, o), (w, \neg o)\} $,  $ \C = \{w, \top\}$, $r(w,h) =1$,   
	 $r(h,o) =3$,    
	 $r(w, \neg o) =2$.  
\end{example}

In the hierarchical abstract normative system setting, the three detachment procedures for prioritized normative reasoning can be defined as follows.

Firstly,  Greedy detachment for a hierarchical abstract normative system always applies the norm with the highest priority among those which can be applied, if this does not bring inconsistency to the extension and the context. To formally define the notion of Greedy, we first introduce the following notions of paths and consistent paths. 


\begin{definition}[Path, consistent path]
Let $\mathcal{H} = \langle L$, $N,\C, r\rangle$ be a hierarchical abstract normative system. 
\begin{itemize}
\item A path in $\mathcal{H}$ from $x_1$ to $x_n$ is a sequence of norms $(x_1, x_2)$, $(x_2, x_3)$, $\dots,  (x_{n-1},x_n)$ such that $\{(x_1, x_2), (x_2, x_3), \dots, (x_{n-1}$, $x_n)\}\subseteq N$, $n\geq 2$, and all norms of the sequence are distinct. 
\item A path in $\mathcal{H}$ from $x_1$ to $x_n$ with respect to $R\subseteq N$  is a sequence of norms $(x_1, x_2)$, $(x_2, x_3), \dots,  (x_{n-1},x_n)$ such that $\{(x_1, x_2), (x_2, x_3), \dots, (x_{n-1}$, $x_n)\}\subseteq R$, $n\geq 2$, and all norms of the sequence are distinct. 
\item A consistent path in $\mathcal{H}$ from $x_1$ to $x_n$ (with respect to $R$) is a path $(x_1, x_2)$, $(x_2, x_3), \dots,  (x_{n-1},x_n)$ such that $\{x_1, x_2, \dots, x_n\}$ is consistent.
\end{itemize}
\end{definition}

Then, the unique extension of a totally ordered hierarchical abstract normative system by Greedy can be defined as follows by first selecting a set of applicable norms with the highest priority in each step, and then collecting the final vertex of each path with respect to the set of selected norms. 

\begin{definition}[Greedy] \label{def-greedy}
Let $\mathcal{H} = \langle L$, $N, \C, r\rangle$ be a totally ordered hierarchical abstract normative system.  For all $R\subseteq N$, let $R(\C) = \{x\mid$ there is a path in $\mathcal{H} $ from an element in $\C$ to $x$ with respect to $R\}$, and $\mbox{\it Appl}(N, \C, R) := \{(a,x)\in N\setminus R \mid a\in \C \cup \mbox{\it cons}(R)$, $\{x, \overline{x}\}$$\not\subseteq \C \cup \mbox{\it cons}(R)\}$. 
The extension of  $\mathcal{H}$ by Greedy, written as $\mbox{\it Greedy}(\mathcal{H})$, is the set $R(C)$ such that $R =  \cup^\infty_{i=0} R_i$ is built inductively as follows.
\begin{eqnarray*}
R_0 & = &  \emptyset \\
R_{i+1} &=&  R_i\cup \mathrm{max}(N,\C, R_i,r)
\end{eqnarray*}
where $\mathrm{max}(N,\C, R_i,r) = \{u\in \mbox{\it Appl}(N,  \C, R_i) \mid \forall v\in  \mbox{\it Appl}(N,  \C, R_i): r(u) \geq r(v)\}$. 

%
\end{definition}

In terms of Definition \ref{def-greedy}, the unique extension of the Order puzzle can be constructed as follows. 

\begin{example}[Extension by Greedy]\label{pans-1b}
Given $\mathcal{H}$ in Example~\ref{pans-1}, by Greedy, it holds that $R_0=\emptyset$,  
 $R_1=\{(w,\neg o)\}$, 
 $R_2 = \{(w, \neg o), (w,  h)\}$, 
 $R = \{(w, \neg o), (w,  h)\}$.
So, $\mbox{\it Greedy}(\mathcal{H})=\{h, \neg o\}$.
\end{example}

Reduction is defined by first using a candidate extension to get a modified hierarchical abstract normative system, and then applying Greedy to it. If the candidate extension is an extension according to this application of Greedy, then it is an extension of the original hierarchical abstract normative system by Reduction. Formally, we have the following definition. 



\begin{definition}[Reduction]\label{def:brewka}
Given a totally ordered hierarchical abstract normative system $\mathcal{H}=\langle L, N$,  $\C$, $r\rangle$, and a set $X$, 
let $\mathcal{H}^X$=$\langle L, N'$,  $\C$, $r'\rangle$, where
%
$N' = \{(\top, x_2) \mid (x_1, x_2)\in N, x_1\in \C \cup X\}$ is the set of ordinary norms, 
and $r'(\top, x_2) = \max(r(x_1, x_2)\mid   (x_1, x_2)\in N, x_1\in \C \cup X)$ for all $(x_1, x_2)\in N$ are priorities over norms.
An extension of $\mathcal{H}$ by Reduction is a set $U$ such that 
$U$ is $\mbox{\it Greedy}(\mathcal{H}^U)$.  The set of extensions of $\mathcal{H}$ by Reduction  is denoted as $\mbox{\it Reduction}(\mathcal{H})$.
\end{definition}

According to Definition \ref{def:brewka}, it can be the case that different norms have the same consequent. To avoid the duplication of multiple body-free norms, only a single norm with the highest priority is used. 

To exemplify the notion of Reduction, consider again the Order puzzle in Example~\ref{pans-1}.

\begin{example}[Extensions by Reduction]\label{pans-1c}
By using Reduction, given $X = \{ h, o\}$,  we have $\mathcal{H}^X = \langle L, N^\prime,  \C, r^\prime\rangle$, \mbox{where}
 $N^\prime = \{(\top,h), (\top,o), (\top,\neg o)\}$, 
 $r^\prime(\top,h) = 1$, $r^\prime(\top, o) = 3$ and $r^\prime(\top,\neg o) = 2$. 
 Since $X\in\mbox{\it Greedy}(\mathcal{H}^X)$, and no other set can be an extension, we have that $\mbox{\it Reduction}(\mathcal{H}) = \{ \{h, o\}\}$.


\end{example}

Note that a totally ordered hierarchical abstract normative system might have more than one extension, as illustrated by the following example.

\begin{example}[Multiple extensions by Reduction]
Given the hierarchical abstract normative system in Figure \ref{fig:poke2}, assume that we have a context $C=\{a\}$. We then consider $X_1=\{b,c\}$ and $X_2=\{\neg b\}$. In the first case, we have $\mathcal{H}^{X_1}=\langle L,N',C,r' \rangle$ where
$N'=\{(\top,b),(\top,\neg b),(\top,c)\}$ and $r'(\top,b)=4$, $r'(\top,c)=3$, $r'(\top,\neg b)=2$. Here, $\mathit{Greedy}(\mathcal{H}^{X_1})=\{b,c\}$,  i.e., $X_1$. 

In the second case, we obtain $\mathcal{H}^{X_2}=\langle L,N',C,r' \rangle$ where
$N'=\{(\top,b),(\top,\neg b)\}$ and $r'(\top,b)=1$, $r'(\top,\neg b)=2$. Now, $\mathit{Greedy}(\mathcal{H}^{X_2})=\{\neg b\}$, i.e., $X_2$. In this case, we therefore obtain two extensions using Reduction. 

\begin{figure}[h!]
\centering
\includegraphics[width=0.7\textwidth]{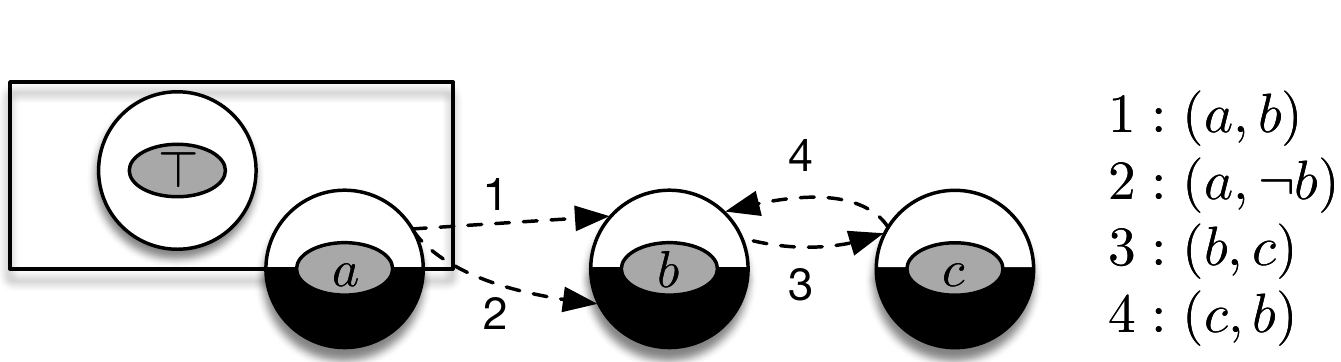}
\caption{\label{fig:poke2} The hierarchical abstract normative system of Example \ref{pans-1c} containing the two  Reduction extensions $\{b,c\}$ and $\{\neg b\}$.}
\end{figure}
\end{example}




Finally, Optimization can be defined by first choosing a set of maximally obeyable norms, and then applying Greedy to it.

\begin{definition}[Optimization] \label{minviolation}
Given a totally ordered hierarchical abstract normative system $\mathcal{H}=\langle L, N$,  $\C$, $r\rangle$,
let $T = (u_1, u_2, \dots, u_n)$ be the linear order on $N$ such that  $u_1> u_2>\dots>u_n$. 
We define a sequence of sets of norms $R_0\dots R_n$ as follows:
\begin{eqnarray*}
R_0 &= & \emptyset \\
R_{i+1} &=&
\left\{ \begin{array}{l}
R_i \cup \{u_i\}, \mbox{ if } \C \cup R(\C)\mbox{ is consistent  where } R=R_i \cup \{u_i\}  \\
R_i, \mbox{ else} \end{array} \right.
\end{eqnarray*}
The unique extension of $\mathcal{H}$ by Optimization is $R_n(C)$ and denoted as $\mbox{\it Optimization}(\mathcal{H})$.
\end{definition}

According to Definition \ref{minviolation}, the unique extension of the Order puzzle can be constructed as follows. 

\begin{example}[Extension by Optimization]\label{ex:extensions}
 Regarding $\mathcal{H}$ in Example~\ref{pans-1}, by 
 Optimization, let $u_1 =(h, o)$,  $u_2 = (w$, $ \neg o)$, and $u_3 = (w,h)$, and $T=(u_1,u_2, u_3)$. Then,  it holds that $R_0= \emptyset$, $R_1= \{u_1\}$, 
  $R_2= \{u_1, u_2\}$, and 
 $R = R_3 = R_2= \{u_1, u_2\}$. 
 So, we obtain that $ \mbox{\it Optimization}(\mathcal{H}) = \{\neg o\}$.   

\end{example}

\section{Argumentation theory for a hierarchical abstract normative system}\label{sec:argPANS}

In this section, we introduce an argumentation theory on prioritized norms. 
Given a hierarchical abstract normative system, we first define arguments and defeats between them, then compute extensions of arguments in terms of Dung's theory~\cite{DBLP:journals/ai/Dung95}, and from these, obtain conclusions.

\subsection{Arguments}
In a hierarchical abstract normative system, an argument is an acyclic path in the graph starting in an element of the context. We assume minimal arguments---no norm can be applied twice in an argument and no redundant norm is included in an argument. 
We use $concl(\alpha)$ to denote the conclusion of an argument $\alpha$, and $concl(E) = \{concl(\alpha) \mid \alpha\in E\}$ for the conclusions of a set of arguments $E$.

\begin{definition}[Arguments and sub-arguments] \label{argument}
Let $\mathcal{H} = \langle L, N,  \C, r\rangle$ be a hierarchical abstract normative system.
\begin{description}
\item[A context argument] in $\mathcal{H}$ is an element $a\in \C$, and its conclusion is $concl(a)=a$.
\item[An ordinary argument] is a consistent path $\alpha$ in $\mathcal{H}$ from $x\in \C$ to some $y\in L$. 
Moreover, we have that $concl(\alpha)= y$.
\item[The sub-arguments] of argument $[u_1,\ldots, u_n]$ are, for $1\leq i\leq  n$, $[u_1,\ldots, u_i]$. Note that context arguments do not have sub-arguments. 
\end{description}
\end{definition}

The set of all arguments constructed from $\mathcal{H}$ is denoted as $\mbox{\it Arg}(\mathcal{H})$. For readability, $[(a_1,a_2)$, $\ldots,(a_{n-1},a_n)]$ may be written as $(a_1,a_2, \ldots, a_{n-1}, a_n)$. The set of sub-arguments of an argument $\alpha$ is denoted as $\mbox{\it sub}(\alpha)$. When $\alpha$ is a sub-argument of $\beta$, we say that $\beta$ is a super-argument of $\alpha$. The set of super-arguments of $\alpha$ is denoted as $\mbox{\it sup}(\alpha)$. Furthermore, the set of proper sub-arguments of $\alpha$ is defined as $\mbox{\it psub}(\alpha) = \mbox{\it sub}(\alpha)\setminus \{\alpha\}$. For an ordinary argument $\alpha = [u_1,\ldots, u_n]$, we call $u_n$ the \emph{top norm} of $\alpha$. Finally,  in each ordinary argument, the norm with lowest priority is called the \emph{weakest norm} of the argument. Formally, we have the following definition for this latter concept.

\begin{definition}[The weakest norm of an argument]
Let $\mathcal{H}$ be a totally ordered hierarchical abstract normative system.  For an ordinary argument $\alpha = [u_1,\ldots, u_n]$ in $\mathcal{H}$, if $r(u_i)=\min(r(u_j)\mid 1\le j\le n)$, then $u_i$ is called the weakest norm of $\alpha$. 
\end{definition}

\subsection{Defeat relation between arguments} 
We follow the tradition in much of preference-based argumentation~\cite{DBLP:conf/sum/AmgoudV10,DBLP:journals/ai/ModgilP13}, where  \emph{attack} captures a relation among arguments which ignores preferences, and {\em defeat} is a preference-aware relation on which the semantics is based. 
To define the defeat relation among prioritized arguments, we assume that {\em only} the priorities of the norms are used to compare arguments. In other words, we assume a lifting of the ordering on norms to a binary relation on sequences of norms (i.e., arguments), written as $\alpha\succeq \beta$, where $\alpha$ and $\beta$ are two arguments, indicating that $\alpha$ is at least as preferred as $\beta$. 




There is no common agreement about the best way to lift $\geq$ to $\succeq$. In argumentation, there are at least two ways to introduce weights. As an argumentation framework consists of a set of arguments and an attack relation between them, we can either assign weights to arguments, or we can assign weights to attacks. Traditionally, weights are assigned to arguments. Two common approaches to give the strength of an argument are the \textit{weakest link} and the \textit{last link} principles, combined with the elitist and democratic ordering~\cite{DBLP:journals/ai/ModgilP13}. For example, in the weakest link principle the weight of the argument is the weight of the weakest rule used in the argument.  However, Young \textit{et al.}~\cite{Young2016} showed that elitist weakest link cannot be used to calculate $\succeq$ for Greedy, and proposes a \emph{disjoint elitist order} which ignores shared rules. It is worth noticing that the strength of an argument may depend on the argument it is attacking, as identified by Young \textit{et al}.~\cite{Young2016}. Based on this idea, we define the orderings between arguments by assigning a strength to the attacks between the arguments, to reflect the priority of the norms used in the arguments, following the same insights of the weakest link and last link principles (denoted as $\succeq_w$ and $\succeq_l$ respectively). Defining the weakest link ordering in the same way as Young \textit{et al}.~\cite{Young2016}, we have the following definition.



\begin{definition}[Weakest link and last link] \label{def-w-l}
Let $\mathcal{H}=\langle L, N$,  $\C$, $r\rangle$ be a hierarchical abstract normative system, and $\alpha = [u_1, \dots, u_n]$ and $\beta = [v_1,\dots, v_m]$ be two arguments in $\mbox{\it Arg}(\mathcal{H})$. Let $\Phi_1 = \{u_1, \dots, u_n\}$ and $\Phi_2 = \{v_1, \dots, v_m\}$.
By the weakest link principle,  $\alpha\succeq_{w} \beta$ iff  $\exists v\in \Phi_2\setminus \Phi_1$ such that $\forall u\in \Phi_1\setminus \Phi_2, v \le u$.
By the last link principle, $\alpha\succeq_{l} \beta$ iff $u_n\geq v_m$.
\end{definition}


When the context is clear, we write $\succeq$ for $\succeq_{w}$, or $\succeq_l$. We write $\alpha\succ \beta$ for $\alpha \succeq \beta$ without $\beta\succeq \alpha$. The following proposition shows the transitivity of the two relations $\succeq_{w}$ and $\succeq_l$.

\begin{proposition}[Transitivity] \label{prop-transitive}
It holds that the relations $\succeq_{w}$ and $\succeq_l$ are transitive. 
\end{proposition}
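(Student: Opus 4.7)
The two relations decouple, so I would handle them independently. For the last-link relation $\succeq_l$, the claim is immediate: by definition $\alpha \succeq_l \beta$ means $r(u_n) \geq r(v_m)$ where $u_n, v_m$ are the top norms, so $\succeq_l$ is the pullback of the total order $\geq$ on $\mathbb{N}$ along the top-norm map, and transitivity of $\geq$ gives the result in one line.

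For the weakest-link relation $\succeq_w$ I would work with witnesses. Given $\alpha \succeq_w \beta$ and $\beta \succeq_w \gamma$ with norm sets $\Phi_1, \Phi_2, \Phi_3$, extract $v_1 \in \Phi_2 \setminus \Phi_1$ with $v_1 \leq u$ for every $u \in \Phi_1 \setminus \Phi_2$, and $v_2 \in \Phi_3 \setminus \Phi_2$ with $v_2 \leq u'$ for every $u' \in \Phi_2 \setminus \Phi_3$. The goal is to exhibit some $v \in \Phi_3 \setminus \Phi_1$ that is $\leq$ every element of $\Phi_1 \setminus \Phi_3$. The natural split is on whether $v_2 \in \Phi_1$. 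If $v_2 \notin \Phi_1$, set $v := v_2$; for any $u \in \Phi_1 \setminus \Phi_3$ one branches on $u \in \Phi_2$ (then $u \in \Phi_2 \setminus \Phi_3$ and $v_2 \leq u$ directly) or $u \notin \Phi_2$ (then $u \in \Phi_1 \setminus \Phi_2$, so $v_1 \leq u$; and if additionally $v_1 \notin \Phi_3$ then $v_1 \in \Phi_2 \setminus \Phi_3$, so $v_2 \leq v_1 \leq u$). If $v_2 \in \Phi_1$, I would instead take the rank-minimum of $\Phi_3 \setminus \Phi_1$ as the candidate and chain inequalities by exploiting the inclusions $\Phi_1 \setminus \Phi_3 \subseteq (\Phi_1 \setminus \Phi_2) \cup (\Phi_2 \setminus \Phi_3)$ and $\Phi_3 \setminus \Phi_1 \subseteq (\Phi_2 \setminus \Phi_1) \cup (\Phi_3 \setminus \Phi_2)$ together with $\min r(\Phi_2 \setminus \Phi_1) \leq \min r(\Phi_1 \setminus \Phi_2)$ and $\min r(\Phi_3 \setminus \Phi_2) \leq \min r(\Phi_2 \setminus \Phi_3)$.

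The main obstacle is the residual sub-case in which $v_1 \in \Phi_3$: the convenient bridge $v_2 \leq v_1 \leq u$ is no longer available, so neither $v_1$ nor $v_2$ can be reused as a witness, and one is forced to reason purely on minima. I also expect a subtlety around the degenerate configuration $\Phi_3 \setminus \Phi_1 = \emptyset$ (for instance when $\gamma$ is a sub-argument of $\alpha$), where the existential clause of the definition has no candidate at all; the proof will either need to rule this out from the construction of arguments as paths starting from the context or invoke a convention that makes $\succeq_w$ behave reflexively on coinciding norm sets.
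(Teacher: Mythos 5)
Your treatment of $\succeq_l$ is complete and coincides with the paper's one-line argument. For $\succeq_w$ your witness-based case split runs parallel to the paper's proof, which distinguishes four cases according to whether each of the two witnesses lies in the norm set of the third argument; but your proposal stops exactly at the two sub-cases that carry the real content, and in both places the diagnosis is off. In the sub-case $v_2\notin\Phi_1$ and $v_1\in\Phi_3$ it is not true that ``neither $v_1$ nor $v_2$ can be reused as a witness'': there both $v_1$ and $v_2$ lie in $\Phi_3\setminus\Phi_1$, and whichever of the two has the smaller rank is a valid witness, because every $u\in\Phi_1\setminus\Phi_3$ lies in $(\Phi_1\setminus\Phi_2)\cup(\Phi_2\setminus\Phi_3)$ and is therefore above $v_1$ or above $v_2$. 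This is precisely the paper's Case~2, where the split ``$x_{23}\le x_3$ or $x_{23}>x_3$'' implements the choice of the smaller of the two witnesses; it uses only totality of $\le$ on ranks, which holds because $r$ maps into $\N$.

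The second gap is more serious. In your Case~B ($v_2\in\Phi_1$) you rightly worry that $\Phi_3\setminus\Phi_1$ may be empty, but neither proposed escape works: $\gamma$ may be a sub-argument of $\alpha$ (sub-arguments are themselves paths from the context), and the paper adopts no reflexivity convention --- $\succeq_w$ is irreflexive by construction. What actually closes this case is antisymmetry of the rank order, i.e.\ the totally ordered assumption. Since $v_2\in\Phi_1\setminus\Phi_2$, the witness property of $v_1$ gives $v_1\le v_2$. If $v_1\in\Phi_3$, then $v_1\in\Phi_3\setminus\Phi_1$ and is itself a witness for $\alpha\succeq_w\gamma$ (every $u\in\Phi_1\setminus\Phi_3$ is in $\Phi_1\setminus\Phi_2$, whence $v_1\le u$, or in $\Phi_2\setminus\Phi_3$, whence $v_1\le v_2\le u$), so no appeal to minima over $\Phi_3\setminus\Phi_1$ is needed. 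If $v_1\notin\Phi_3$, then $v_1\in\Phi_2\setminus\Phi_3$ gives $v_2\le v_1$ as well, and antisymmetry forces $v_1=v_2$, absurd since $v_1\notin\Phi_1\ni v_2$. Without antisymmetry this sub-case cannot be closed at all: with $r(a,b)=r(a,d)=1$, $r(b,c)=2$, $\C=\{a,\top\}$, and $\alpha=[(a,b),(b,c)]$, $\beta=[(a,d)]$, $\gamma=[(a,b)]$, one has $\alpha\succeq_w\beta$ and $\beta\succeq_w\gamma$ but $\Phi_3\setminus\Phi_1=\emptyset$, so $\alpha\not\succeq_w\gamma$. (The paper's own Case~3 also leans on this implicitly, dismissing it with ``$\beta\succeq_w\alpha$, contradiction''.) So the proposal needs the explicit smaller-witness argument in Case~A and an appeal to the total order on norms in Case~B; as written it does not constitute a proof.
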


\begin{proof}
Let $\alpha = [u_1, \dots, u_n]$, $\beta = [v_1,\dots, v_m]$ and $\gamma = [w_1,\dots, w_k]$ be three arguments in $\mbox{\it Arg}(\mathcal{H})$. For the case of the weakest link, let $\Phi_1 = \{u_1, \dots, u_n\}$, $\Phi_2 = \{v_1,\dots, v_m\}$, and $\Phi_3 = \{w_1,\dots, w_k\}$, $n,m,k\ge 1$. Let  
$x_{12}\in \Phi_1\cap \Phi_2$, $x_{13}\in \Phi_1\cap\Phi_3$, and $x_{23}\in \Phi_2\cup \Phi_3$, $x_{1}\in \Phi_1\setminus (\Phi_2\cup \Phi_3)$, $x_{2}\in \Phi_2\setminus (\Phi_1\cup \Phi_3)$, and $x_{3}\in \Phi_3\setminus (\Phi_1\cup \Phi_2)$. Assume that $\alpha\succeq_{w}\beta$ and $\beta\succeq_{w}\gamma$. There are only the following four possible cases.

Case 1: There exists $x_{23}\in \Phi_2$, for all $x_{13}, x_1\in \Phi_1$,
 $x_{23} \le x_{13}$ and $x_{23} \le x_{1}$; and since $x_{13}\in \Phi_3$, assume that for all $x_{12}, x_2\in \Phi_2$, $x_{13} \le x_{12}$, $x_{13} \le x_{2}$. It follows that $x_{23} \le x_{12}$ and $x_{23} \le x_{1}$. Since $x_{23} \in \Phi_3$, it means that there exists $x_{23} \in \Phi_3$ such that $x_{23} \le x_{12}$ and $x_{23} \le x_{1}$ where $x_{12}, x_{1}\in \Phi_1$. Hence, $\alpha\succeq_{w}\gamma$.

Case 2: There exist $x_{23}\in \Phi_2$ and $x_{3}\in \Phi_3$,  for all $x_{13}, x_1\in \Phi_1$, and $x_{12}, x_2\in \Phi_2$: $x_{23} \le x_{13}$, $x_{23} \le x_{1}$, $x_{3} \le x_{12}$, $x_{3} \le x_{2}$. In this case, there are in turn only the following two possible sub-cases: either $x_{23} \le x_3$ or $x_{23} > x_3$. If $x_{23} \le x_3$, since $x_{3} \le x_{12}$, it holds that $x_{23} \le x_{12}$. Since $x_{23} \le x_{1}$ and $x_{23} \le x_{12}$, it holds that $\alpha\succeq_{w}\gamma$. Second, if $x_{23} > x_3$, since $x_{23} \le x_{1}$, $x_{3} \le x_{1}$. Since $x_{3} \le x_{12}$ and $x_{3} \le x_{1}$, $\alpha\succeq_{w}\gamma$. 

Case 3: There exists $x_{2}\in \Phi_2$, for all $x_{13}, x_1\in \Phi_1$,
 $x_{2} \le x_{13}$ and $x_{2} \le x_{1}$; and since $x_{13}\in \Phi_3$, assume that for all $x_{12}, x_2\in \Phi_2$, $x_{13} \le x_{12}$, $x_{13} \le x_{2}$. In this case, there are in turn only the following two possible sub-cases: $x_{23} \le x_{13}$ or $x_{3} \le x_{13}$, or  $x_{23} > x_{13}$ and $x_{3} > x_{13}$. If $x_{23} \le x_{13}$ and $x_{3} \le x_{13}$, it holds that $\alpha\succeq_{w}\gamma$. If $x_{23} > x_{13}$ and $x_{3} > x_{13}$, it holds that $x_{13} < x_{23}$ and $x_{13} \le x_{2}$, and therefore $\beta\succeq_{w}\alpha$. Contradiction. 

Case 4: There exist $x_{2}\in \Phi_2$ and $x_{3}\in \Phi_3$,  for all $x_{13}, x_1\in \Phi_1$, and $x_{12}, x_2\in \Phi_2$: $x_{2} \le x_{13}$, $x_{2} \le x_{1}$, $x_{3} \le x_{12}$, $x_{3} \le x_{2}$. In this case, since $x_{3} \le x_{12}$ and $x_{3} \le x_1$, it holds that $\alpha\succeq_{w}\gamma$.

For last link, if $\alpha\succeq_{l}\beta$ and $\beta\succeq_{l}\gamma$, then $u_n\ge v_m$ and $v_m\ge w_k$. It follows that $u_n\ge w_k$, and therefore $\alpha\succeq_{l}\gamma$. So, it holds that  $\succeq_l$ is transitive.  
\end{proof}

Given a way to lift the ordering on norms to an ordering on arguments, the notion of defeat can be defined as follows. 

\begin{definition}[Defeat among arguments] \label{defeat}
Let $\mathcal{H} = \langle L,N$, $\C,r\rangle$ be a hierarchical abstract normative system. For all $\alpha, \beta\in \mbox{\it Arg}(\mathcal{H})$,
\begin{description}
\item[$\alpha$ attacks $\beta$] iff $\beta$ has a sub-argument $\beta'$ such that 
\begin{enumerate}
\item
$concl(\alpha)=\overline{concl(\beta')}$
\end{enumerate}
\item[$\alpha$ defeats $\beta$] iff $\beta$ has a sub-argument $\beta'$ such that 
\begin{enumerate}
\item
$concl(\alpha)=\overline{concl(\beta')}$ and 
\item 
$\alpha$ is a context argument; or $\alpha$ is an ordinary argument and $\alpha\succeq \beta$. 
\end{enumerate}
\end{description}
\end{definition}


The set of defeats between the arguments in $\mbox{\it Arg}(\mathcal{H})$ based on a preference ordering $\succeq$ is denoted as $\mbox{\it Def}(\mathcal{H},\succeq)$.

In what follows, an argument $\alpha = [u_1, \dots, u_n]$ with ranking on norms is denoted as $u_1\dots u_n: r(\alpha)$, where $r(\alpha) = (r(u_1), \dots, r(u_n))$.

The notions of arguments and defeat relations between arguments by the weakest link and the last link principles respectively can be illustrated by the following example. 

\begin{example}[Order puzzle continued] \label{argnattck}
Consider the hierarchical abstract normative system in Example~\ref{pans-1}. We have the following arguments (visually presented in the top of  Figure~\ref{fig:ex-2}):
\begin{description}
\item[]$A_0$ : $w$ \hfill (context argument)
\item[]$A_1$ : $(w,h):(1)$ \hfill (ordinary argument)
\item[]$A_2$ : $(w,h)(h, o):(1,3)$ \hfill (ordinary argument)
\item[]$A_3$ : $(w,\neg o):(2)$ \hfill (ordinary argument)
\end{description}
We have that $A_2$ attacks $A_3$ and vice versa, and there are no other attacks among the arguments.
Moreover, $A_2$ defeats $A_3$ by the last link principle (Figure \ref{fig:ex-2}, middle), and $A_3$ defeats $A_2$ by the weakest link principle (Figure \ref{fig:ex-2}, bottom).
\end{example}


\subsection{Argument extensions and conclusion extensions} Given a set of arguments $\mathcal{A} = \mbox{\it Arg}(\mathcal{H})$ and a set of defeats $\mathcal{R} =\mbox{\it Def}(\mathcal{H},\succeq)$, we get an argumentation framework (AF) $\mathcal{F} = (\mathcal{A}, \mathcal{R})$. 

Following the notions of abstract argumentation by Dung \cite{DBLP:journals/ai/Dung95}, we say that a set $B\subseteq \mathcal{A}$ is \emph{admissible}, if and only if it is \textit{conflict-free} and it can defend each argument within the set.  A set $B\subseteq \mathcal{A}$ is \emph{conflict-free} if and only if there exist no arguments $\alpha$ and $\beta$ in $B$ such that $(\alpha,\beta)\in \mathcal{R}$. Argument $\alpha\in \mathcal{A}$ is \textit{defended} by a set $B\subseteq \mathcal{A}$ (in such a situation $\alpha$ can also be said to be  \emph{acceptable} with respect to $B$) if and only if for all $\beta\in \mathcal{A}$, if $(\beta,\alpha)\in \mathcal{R}$, then there exists $\gamma\in B$ such that $(\gamma,\beta)\in \mathcal{R}$. Based on the notion of admissible sets, some other extensions can be defined. Formally, we have the following.

\begin{definition}[Conflict-freeness, defense and extensions] \label{Def-AF-conflict free}
{Let $\mathcal{F} = (\mathcal{A}, \mathcal{R})$ be an argumentation framework, and $B\subseteq \mathcal{A}$ a set of arguments.}
\begin{itemize}
 \item {$B$ is \emph{conflict-free} if and only if $\nexists \alpha, \beta\in B$, such that  $(\alpha,\beta)\in \mathcal{R}$.}
  \item {An argument $\alpha\in \mathcal{A}$ is defended by $B$ (equivalently  $\alpha$ is \emph{acceptable} with respect to $B$), if and only if $\forall(\beta,\alpha)\in \mathcal{R}$, $\exists\gamma\in B$, such that $(\gamma,\beta)\in \mathcal{R}$.}
  \item {$B$ is \emph{admissible} if and only if $B$ is conflict-free, and each argument in $B$ is defended by $B$.}
  \item {$B$ is a \emph{complete} extension if and only if $B$ is admissible and each argument in $\mathcal{A}$ that is defended by $B$ is in $B$.}
 \item {$B$ is a \emph{preferred} extension if and only if $B$ is a maximal (with respect to set-inclusion) complete extension.}
  \item {$B$ is a \emph{grounded} extension if and only if $B$ is the minimal (with respect to set-inclusion) complete extension.}
  \item {$B$ is a \emph{stable} extension if and only if $B$ is conflict-free, and $\forall \alpha\in \mathcal{A}\setminus B$, $\exists \beta\in B$ such that $(\beta,\alpha)\in \mathcal{R}$.}
\end{itemize}
\end{definition}

A semantics describes the set of extensions one wishes to obtain. 
We use $\mbox{\it sem} \in \{\mbox{\it cmp}, \mbox{\it prf}, \mbox{\it grd}, \mbox{\it stb}\}$ to denote the complete, preferred, grounded, and stable semantics, respectively. A set of argument extensions of $\mathcal{F} = (\mathcal{A}, \mathcal{R})$ is denoted as $\mbox{\it sem}(\mathcal{F})$. 
 We write $\Outfamily$ for the set of conclusions from the extensions of the argumentation theory ~\cite{normaspic}.

\begin{definition}[Conclusion extensions]
Given a hierarchical abstract normative system $\mathcal{H} = \langle L,N,\C,r\rangle$, let $\mathcal{F}  = (\mbox{\it Arg}(\mathcal{H}), \mbox{\it Def}(\mathcal{H},\succeq))$ be the AF constructed from $\mathcal{H}$.
The conclusion extensions, written as $\Outfamily(\mathcal{F},$ $\mbox{\it sem})$, are the conclusions of the ordinary arguments within argument extensions.
$$\mathit{Outfamily}(\mathcal{F},\mathit{sem})=\{\{concl(\alpha)\mid \alpha \in S, \alpha \mbox{ is an ordinary argument} \}  \mid S\in \mbox{\it sem}(\mathcal{F})\}$$
\end{definition}



The following example shows that for a hierarchical abstract normative system, when adopting different principles for lifting the priorities over norms to those over arguments, the resulting argumentation frameworks as well as their conclusion extensions may be different.

\begin{example}[Order puzzle in argumentation] \label{Ptc}
According to Example~\ref{argnattck}, let $\mathcal{A} = \{A_0$, $\dots, A_3\}$. We have $\mathcal{F}_1 = (\mathcal{A}$, $\{(A_2$, $A_3)\})$ where $A_2 \succeq_l A_3$,  and $\mathcal{F}_2 = (\mathcal{A}$, $\{(A_3, A_2)\})$ where $A_3 \succeq_w A_2$. For all $\mbox{\it sem} \in \{\mbox{\it cmp}, \mbox{\it prf}, \mbox{\it grd}, \mbox{\it stb}\}$, $\Outfamily(\mathcal{F}_1,$ $\mbox{\it sem}) = \{\{h,o\}\}$, and $\Outfamily(\mathcal{F}_2,$ $\mbox{\it sem}) = \{\{h, \neg o\}\}$.
%
\end{example}

We now turn our attention to the properties of the argumentation theory for a hierarchical abstract normative system. 

First, according to Definition \ref{defeat}, we have the following proposition, capturing the relation between the attack/defeat on an argument and on its super-arguments.

\begin{proposition}[Super-argument attack and defeat] \label{prop-defeat-sup}
Let $\mathcal{F} = (\mathcal{A}, \mathcal{R})$ be an AF constructed from a hierarchical abstract normative system. For all $\alpha, \beta\in \mathcal{A}$, if $\alpha$ attacks $\beta$, then $\alpha$ attacks all super-arguments of $\beta$; if $\alpha$ defeats $\beta$,  $\alpha$ defeats all super-arguments of $\beta$.
\end{proposition}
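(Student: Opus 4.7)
The plan is to reduce both claims to a single structural fact, the transitivity of the sub-argument relation: by Definition~\ref{argument}, sub-arguments are prefixes of the underlying norm sequence, so if $\beta' \in \mbox{\it sub}(\beta)$ and $\beta \in \mbox{\it sub}(\gamma)$ then $\beta' \in \mbox{\it sub}(\gamma)$. The idea is then to keep the very same sub-argument $\beta'$ that witnesses $\alpha$'s attack on $\beta$ and reuse it to witness the attack (and, in the second part, the defeat) on each super-argument $\gamma$ of $\beta$.

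For the attack part, I would fix $\beta' \in \mbox{\it sub}(\beta)$ with $concl(\alpha) = \overline{concl(\beta')}$ coming from the assumption that $\alpha$ attacks $\beta$. For any $\gamma \in \mbox{\it sup}(\beta)$, transitivity gives $\beta' \in \mbox{\it sub}(\gamma)$, and the equality $concl(\alpha) = \overline{concl(\beta')}$ depends only on $\alpha$ and $\beta'$, not on the ambient argument in which $\beta'$ sits. Condition~1 of Definition~\ref{defeat} is therefore satisfied for $\gamma$, and $\alpha$ attacks $\gamma$.

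For the defeat part, the conclusion-matching condition transfers exactly as above; the preference condition splits by cases. If $\alpha$ is a context argument, Definition~\ref{defeat} is automatically discharged for $\gamma$. If $\alpha$ is ordinary, I need the preference comparison justifying the defeat of $\beta$ to remain valid against $\gamma$, and this is the only step that needs real work. For the weakest link ordering, monotonicity under extension carries the preference through: from $\Phi_\beta \subseteq \Phi_\gamma$ one obtains $\Phi_\gamma \setminus \Phi_\alpha \supseteq \Phi_\beta \setminus \Phi_\alpha$ and $\Phi_\alpha \setminus \Phi_\gamma \subseteq \Phi_\alpha \setminus \Phi_\beta$, so the witness $v$ certifying $\alpha \succeq_w \beta$ still lies in $\Phi_\gamma \setminus \Phi_\alpha$ and still dominates the (now smaller) set $\Phi_\alpha \setminus \Phi_\gamma$, giving $\alpha \succeq_w \gamma$. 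For the last link ordering, the global monotonicity analogue fails, and the natural reading, consistent with standard preference-based argumentation, is that the preference test in Definition~\ref{defeat} is performed against the attacked sub-argument $\beta'$, which is unchanged when we pass from $\beta$ to $\gamma$ and so requires no further verification.

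The main obstacle is thus the preference-preservation step of the defeat case; everything else is a direct consequence of transitivity of the sub-argument relation. For weakest link this amounts to an easy set-theoretic monotonicity calculation over the sets of norms appearing in $\alpha$, $\beta$, and $\gamma$; for last link the preference test has to be understood as applying to the attacked sub-argument $\beta'$. Once that point is settled, both halves of the proposition follow uniformly.
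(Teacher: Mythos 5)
Your proof is correct and follows the same skeleton as the paper's: both rest on transitivity of the sub-argument relation, reusing the witness $\beta'$ for the attack on $\beta$ verbatim for every super-argument $\gamma$. The genuine difference lies in the preference step of the defeat case. The paper's proof simply restates the defeat condition as ``$\alpha$ is an ordinary argument and $\alpha \succ \beta'$'', i.e.\ it reads the preference test in Definition~\ref{defeat} as applying to the attacked sub-argument $\beta'$, after which the transfer to $\gamma$ is immediate and no preservation argument is needed. You instead take Definition~\ref{defeat} at face value (the test is $\alpha \succeq \beta$ against the whole attacked argument) and therefore must show the preference survives the passage from $\beta$ to $\gamma$. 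Your monotonicity computation for $\succeq_w$ is correct: the witness $v \in \Phi_\beta \setminus \Phi_\alpha$ stays in $\Phi_\gamma \setminus \Phi_\alpha$ and still dominates the shrunken set $\Phi_\alpha \setminus \Phi_\gamma$. Your diagnosis of $\succeq_l$ also catches a real ambiguity: under the literal reading the claim would actually fail (take $\alpha$ defeating $\beta$ by last link and a super-argument $\gamma$ of $\beta$ whose top norm outranks the top norm of $\alpha$; then $\alpha \not\succeq_l \gamma$), so the proposition only holds under the sub-argument reading that the paper's own proof silently adopts. In short, your route is longer but robust under the literal definition for weakest link and makes explicit the reading required for last link, whereas the paper's route is shorter but hinges on an unstated reinterpretation of the defeat condition.
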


\begin{proof}
When $\alpha$ attacks $\beta$, according to Definition \ref{defeat}, $\beta$ has a sub-argument $\beta'$ such that 
$concl(\alpha)=\overline{concl(\beta')}$. Let $\gamma$ be an super-argument of $\beta$. It follows that $\beta'$ is a sub-argument of $\gamma$. Hence, $\alpha$ attacks $\gamma$. 

When $\alpha$ defeats $\beta$, according to Definition \ref{defeat}, $\beta$ has a sub-argument $\beta'$ such that 
$concl(\alpha)=\overline{concl(\beta')}$ and  
$\alpha$ is a context argument; or $\alpha$ is an ordinary argument and $ \alpha \succ \beta'$, 
Since $\beta'$ is a sub-argument of $\gamma$, $\alpha$ defeats $\gamma$. 
\end{proof}





Second, 
corresponding to properties of sub-argument closure and direct consistency in $\mbox{\it ASPIC}^+$~\cite{DBLP:journals/ai/ModgilP13}, we have the following two properties. 

\begin{proposition}[Closure under sub-arguments] \label{subarg}
Let $\mathcal{F} = (\mathcal{A}, \mathcal{R})$ be an AF constructed from a hierarchical abstract normative system. For all $\mbox{\it sem} \in \{\mbox{\it cmp}, \mbox{\it prf}, \mbox{\it grd}, \mbox{\it stb}\}$, $\forall E\in \mbox{\it sem}(\mathcal{F})$, if an argument $\alpha\in E$, then $\mbox{\it sub}(\alpha)\subseteq E$.
\end{proposition}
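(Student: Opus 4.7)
The plan is to exploit the just-proved Proposition on super-argument attack and defeat, which states that any defeater of a sub-argument is also a defeater of the whole. Since a preferred or grounded extension is by definition a particular complete extension, it suffices to handle the complete and the stable case separately.

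For the complete case, let $E$ be a complete extension with $\alpha \in E$, and let $\beta' \in \mbox{\it sub}(\alpha)$. I would show that $\beta'$ is acceptable with respect to $E$, after which completeness forces $\beta' \in E$. To see acceptability, take any defeater $\gamma$ of $\beta'$. Since $\alpha$ is a super-argument of $\beta'$, by Proposition on super-argument defeat the argument $\gamma$ also defeats $\alpha$. Because $\alpha \in E$ and $E$ is admissible, some $\delta \in E$ defeats $\gamma$. Hence $E$ defends $\beta'$, and by completeness $\beta' \in E$. An easy induction on the length of the chain of sub-arguments (or simply quantifying over all $\beta' \in \mbox{\it sub}(\alpha)$ at once, since the argument above is uniform) yields $\mbox{\it sub}(\alpha) \subseteq E$.

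For the stable case, I would argue by contradiction. Suppose $E$ is stable, $\alpha \in E$, and some $\beta' \in \mbox{\it sub}(\alpha)$ is not in $E$. By stability, some $\delta \in E$ defeats $\beta'$. Applying the super-argument defeat proposition once more, $\delta$ defeats $\alpha$ as well, which contradicts the conflict-freeness of $E$, since both $\delta$ and $\alpha$ belong to $E$.

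I do not expect any serious obstacle: the super-argument propagation lemma does all the structural work, and the remainder is the standard admissibility/completeness argument used in $\mbox{\it ASPIC}^+$-style proofs of sub-argument closure. The only small care-point is to verify that the argument goes through uniformly for context sub-arguments (which have no further sub-arguments and can only be attacked at their own conclusion) and for ordinary sub-arguments; but since Definition of defeat already treats both kinds on equal footing via the condition $concl(\alpha) = \overline{concl(\beta')}$, no extra case analysis is needed.
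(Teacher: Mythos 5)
Your proposal is correct and follows essentially the same route as the paper: both rest on the super-argument defeat proposition to show that every defeater of a sub-argument $\beta'$ also defeats $\alpha$, so $E$'s defense of $\alpha$ transfers to $\beta'$, and completeness then forces $\beta' \in E$. The only cosmetic difference is that you treat the stable case separately via conflict-freeness, whereas the paper handles all four semantics uniformly by implicitly using the standard fact that stable extensions are complete; either way the argument goes through.
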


\begin{proof}
For every $\beta\in \mbox{\it sub}(\alpha)$, since $\alpha$ is acceptable with respect to $E$, it holds that $\beta$ is acceptable with respect to $E$. This is because for each $\gamma\in \mathcal{R}$, if $\gamma$ defeats $\beta$, then according to Proposition \ref{prop-defeat-sup}, $\gamma$ defeats $\alpha$; since $\alpha\in E$, there exists an $\eta\in E$ such that $\eta$ defeats $\gamma$.  Given that $\beta$ is acceptable with respect to $E$ and $E$ is a complete extension, it holds that $\beta\in E$.
\end{proof}

Since all norms in a hierarchical abstract normative system are defeasible, we only need to discuss direct consistency and contextual consistency.  



\begin{proposition}[Direct consistency] \label{consistency} 
Let $\mathcal{F} = (\mathcal{A}, \mathcal{R})$ be an AF constructed from a hierarchical abstract normative system. For all $\mbox{\it sem} \in \{\mbox{\it cmp}, \mbox{\it prf}, \mbox{\it grd}, \mbox{\it stb}\}$, $\forall E\in \mbox{\it sem}(\mathcal{F})$, $\{concl(\alpha)\mid \alpha \in E, \alpha \mbox{ is an ordinary argument}\} $ is consistent. 
%
\end{proposition}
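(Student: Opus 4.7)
The plan is a short argument by contradiction, leveraging the fact that every extension under cmp, prf, grd, or stb is conflict-free by Definition~\ref{Def-AF-conflict free}. Suppose the stated set is inconsistent. Then there exist ordinary arguments $\alpha,\beta\in E$ with $concl(\alpha)=\overline{concl(\beta)}$; in particular $\alpha\neq\beta$, and their underlying norm sets $\Phi_1,\Phi_2$ must differ, because under the acyclicity and minimality assumptions recalled in Section~\ref{sec:argPANS} two arguments using the same set of norms coincide as paths (the tails and heads force a unique ordering starting from the context node). Since the sub-arguments of an ordinary argument include itself, taking $\beta'=\beta$ in Definition~\ref{defeat} yields that $\alpha$ attacks $\beta$, and symmetrically $\beta$ attacks $\alpha$.

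The core step is to promote one of these mutual attacks into a defeat, which for ordinary arguments requires comparability under the lifted preference $\succeq$. I would verify that both lifting principles of Definition~\ref{def-w-l} induce a total relation. For $\succeq_l$ this is immediate from totality of $\geq$ on $\N$ applied to the top-norm ranks. For $\succeq_w$, I would rewrite the defining condition as $\min_{\leq}(\Phi_2\setminus \Phi_1)\leq \min_{\leq}(\Phi_1\setminus \Phi_2)$, using the convention $\min\emptyset = +\infty$; since we have already ruled out $\Phi_1=\Phi_2$, at least one of the two sides is finite and the totality of $\leq$ on $\N$ then delivers totality of $\succeq_w$. Assuming without loss of generality that $\alpha\succeq\beta$, clause~2 of Definition~\ref{defeat} gives that $\alpha$ defeats $\beta$, which contradicts the conflict-freeness of $E$.

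The step I expect to require the most care is the totality of $\succeq_w$, and in particular the auxiliary observation that two distinct minimal acyclic paths starting from the context cannot share the same underlying norm set; the remaining steps are direct unfoldings of the attack, defeat, and conflict-freeness definitions.
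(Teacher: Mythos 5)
Your proposal is correct and follows essentially the same route as the paper's proof: assume two ordinary arguments $\alpha,\beta\in E$ with contradictory conclusions, note the mutual attack, use totality of $\succeq$ to turn one attack into a defeat, and contradict conflict-freeness of $E$. The only difference is that you explicitly verify totality of $\succeq_l$ and $\succeq_w$ (including the observation that $\Phi_1\neq\Phi_2$), a step the paper's ``If $\alpha\succeq\beta$ then \dots; otherwise \dots'' phrasing leaves implicit.
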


\begin{proof}
Assume that there exist $\alpha,\beta\in E$ such that $concl(\alpha) = \overline{concl(\beta)}$. Since both $\alpha$ and $\beta$ are ordinary arguments, $\alpha$ attacks $\beta$, and $\beta$ attacks $\alpha$. If $\alpha\succeq \beta$ then $\alpha$ defeats $\beta$. Otherwise, $\beta$ defeats $\alpha$. In both cases, $E$ is not conflict-free, contradicting the fact that $E$ is a complete extension.  
\end{proof}

\begin{proposition}[Contextual consistency] \label{cconsistency} 
Let $\mathcal{F} = (\mathcal{A}, \mathcal{R})$ be an AF constructed from a hierarchical abstract normative system $\mathcal{H} = \langle L,N,\C,r\rangle$. For all $\mbox{\it sem} \in \{\mbox{\it cmp}, \mbox{\it prf}, \mbox{\it grd}$, $\mbox{\it stb}\}$, $\forall E\in \mbox{\it sem}(\mathcal{F})$, $\C\cup\{concl(\alpha)\mid \alpha \in E, \alpha \mbox{ is an ordinary argument}\} $ is consistent. 
%
\end{proposition}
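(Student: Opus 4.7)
The plan is to proceed by contradiction. Suppose that $\C \cup \{concl(\alpha) \mid \alpha \in E, \alpha \mbox{ is an ordinary argument}\}$ is inconsistent. Proposition~\ref{consistency} already guarantees that the conclusions of ordinary arguments in $E$ are mutually consistent, and the defining condition of a hierarchical abstract normative system (Definition~\ref{def-pans}) forces $\C$ itself to be consistent. So any inconsistency in the union must be cross-cutting: there exist some $c \in \C$ and some ordinary argument $\alpha \in E$ with $concl(\alpha) = \overline{c}$.

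From this point I would instantiate the context argument $A_c \in \mathcal{A}$ corresponding to $c$. Since $\alpha$ is an ordinary argument, it is one of its own sub-arguments, and $concl(A_c) = c = \overline{concl(\alpha)}$ immediately makes $A_c$ an attacker of $\alpha$ in the sense of Definition~\ref{defeat}. Because $A_c$ is a context argument, the second disjunct of the defeat clause fires unconditionally and promotes this attack to a defeat, without any need for a preference comparison between $A_c$ and $\alpha$.

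Each of the semantics $\mbox{\it cmp}, \mbox{\it prf}, \mbox{\it grd}, \mbox{\it stb}$ yields admissible sets (stable extensions being admissible since they are conflict-free and defeat every outside argument, hence in particular defend their own members). Admissibility of $E$ therefore demands that some $\gamma \in E$ defeat $A_c$. But Definition~\ref{defeat} requires the defeated argument to possess a sub-argument $\beta'$ with $concl(\gamma) = \overline{concl(\beta')}$, while Definition~\ref{argument} explicitly records that context arguments have no sub-arguments. Consequently $A_c$ cannot be defeated by any argument whatsoever, so no such $\gamma$ can exist; this contradicts the admissibility of $E$ and completes the proof.

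The key observation I would highlight is that the whole argument rests on the structural undefeatability of context arguments, which is a consequence of the sub-argument clause in Definition~\ref{argument} rather than of any priority-based reasoning. Once that is noticed, contextual consistency becomes a one-step strengthening of direct consistency: context arguments act as a rigid bedrock that every admissible (and hence complete, preferred, grounded, or stable) extension is forced to respect.
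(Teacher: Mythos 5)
Your proof is correct and follows essentially the same route as the paper's: reduce to the cross-cutting case via Definition~\ref{def-pans} and Proposition~\ref{consistency}, observe that the context argument for $c$ unconditionally defeats $\alpha$, and conclude $\alpha \notin E$. The only difference is that you make explicit why the defeat expels $\alpha$ (context arguments have no sub-arguments and hence cannot be defeated, so $\alpha$ is indefensible), a step the paper leaves implicit.
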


\begin{proof}
Since $\C$ is consistent, we only need to verify that for all $a\in \C$, for all $\alpha\in E$, $\{a, concl(\alpha)\}$ is consistent. We use proof by contradiction. Assume the contrary, i.e., $concl(\alpha) = \overline{a}$. It follows that $a$ defeats $\alpha$, and therefore $\alpha\notin E$. Contradiction. So the assumption is false, i.e., $\C\cup\{concl(\alpha)\mid \alpha \in E, \alpha \mbox{ is an ordinary argument}\} $ is consistent. This completes the proof. 
\end{proof}

In the next sections, 
we present representation results for the Greedy, Reduction and Optimization approaches introduced in Section~\ref{sect-pns}, identifying equivalences between these approaches and the argument semantics based descriptions of a hierarchical abstract normative system. 


\section{Representation results for Greedy }\label{sec:greedy}
%
Based on the idea introduced in Section 1, for a totally ordered hierarchical abstract normative system, 
we have the following lemma and proposition.

\begin{lemma}[Unique extension of Greedy]\label{lemma-acyclic}
Given a totally ordered hierarchical abstract normative system $\mathcal{H} = \langle L$, $N,  \C , r\rangle$ and the corresponding argumentation framework $\mathcal{F} = (\mbox{\it Arg}(\mathcal{H})$, $ \mbox{\it Def}(\mathcal{H},\succeq_w))$, it holds that 
$\mathcal{F}$ is acyclic, and therefore has a unique extension under stable semantics.
\vskip \baselineskip
\noindent
{\bf Proof.} 
Since $\mathcal{H}$ is totally ordered, under $\succeq_w$, the relation $\succeq_w$ among arguments is acyclic. 
Assume the contrary. Then, there exist three distinct $\alpha, \beta, \gamma\in \mbox{\it Arg}(\mathcal{H})$ such that $\alpha \succeq_w \beta$, $\beta \succeq_w \gamma$ and $\gamma \succeq_w \alpha$. According to Definition  \ref{def-w-l}, 
when $\mathcal{H}$ is totally ordered, it holds that $\alpha \succ_w \beta$, $\beta \succ_w \gamma$ and $\gamma \succ_w \alpha$. According to Proposition \ref{prop-transitive}, $\alpha \succ_w \gamma$, contradicting $\gamma \succ_w \alpha$. 
Hence, $\mathcal{F}$ is acyclic, and therefore has a unique extension under stable semantics.
\end{lemma}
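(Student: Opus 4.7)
The plan is to establish the acyclicity of $\mathcal{F}$ directly, and then cite the standard Dung-style result that every finite acyclic argumentation framework has a unique extension under the complete, preferred, grounded and stable semantics. This reduces the lemma to a structural question about the lifted preference $\succeq_w$.

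For acyclicity I would argue by contradiction, supposing a defeat cycle $\alpha_1 \to \alpha_2 \to \dots \to \alpha_n \to \alpha_1$ of minimal length $n\ge 1$. The case $n=1$ is immediately excluded: an ordinary argument is a consistent path, so its conclusion cannot be the negation of any sub-conclusion. Context arguments can be removed from any longer cycle as well: a context argument $a$ has empty norm-set, so under Definition \ref{def-w-l} no $\alpha \succeq_w a$ can hold because the witness would have to lie in $\Phi_a\setminus\Phi_\alpha = \emptyset$; hence context arguments are never defeated. So every $\alpha_i$ in the cycle is ordinary, and each defeat forces $\alpha_i\succeq_w\alpha_{i+1}$ by Definition \ref{defeat}. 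Applying the transitivity of $\succeq_w$ (Proposition \ref{prop-transitive}) around the cycle collapses the general case to a 2-cycle between two distinct arguments $\alpha$ and $\beta$ with $\alpha\succeq_w\beta$ and $\beta\succeq_w\alpha$.

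The crux is refuting this 2-cycle under the assumption that $\mathcal{H}$ is totally ordered. I would split on the symmetric difference of the norm sets $\Phi_\alpha$ and $\Phi_\beta$. If both $\Phi_\alpha\setminus\Phi_\beta$ and $\Phi_\beta\setminus\Phi_\alpha$ are nonempty, then the two weakest-link witnesses $v\in\Phi_\beta\setminus\Phi_\alpha$ and $v'\in\Phi_\alpha\setminus\Phi_\beta$, each plugged into the universal part of the other's inequality, give $v\le v'$ and $v'\le v$; antisymmetry of the total order forces $v=v'$, contradicting their disjoint membership. If one of the two differences is empty, the $\succeq_w$ comparison in that direction cannot be satisfied because its existential witness would have to come from an empty set. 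Finally, $\Phi_\alpha=\Phi_\beta$ is excluded because the norm-multiset of an acyclic path determines the path, so the two arguments would coincide. The contradiction closes the acyclicity argument.

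For the second conclusion of the lemma I would invoke the folklore fact that on a finite acyclic argumentation framework the complete, preferred, grounded and stable extensions all coincide in the single set obtained by iteratively collecting arguments with no remaining defeaters; finiteness of $\mbox{\it Arg}(\mathcal{H})$ follows from the finiteness of $\mathcal{H}$ and from the minimality built into Definition \ref{argument}. The main obstacle is the 2-cycle elimination of the previous paragraph: it is the only place where the totality of $\le$ on $N$ is genuinely used, and the nested quantifier structure of the weakest-link definition, combined with the need to handle empty symmetric-difference sets separately, makes this the delicate technical step.
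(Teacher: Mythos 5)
Your proposal is correct and follows essentially the same route as the paper: both reduce acyclicity of the defeat relation to acyclicity of $\succeq_w$, using the transitivity of Proposition \ref{prop-transitive} together with the total order on norms, and then invoke the standard fact that a (finite) acyclic framework has a unique stable extension. Your write-up is in fact more complete than the paper's, which only treats three-element cycles and merely asserts that $\succeq_w$ becomes strict under a total order, whereas you explicitly dispose of self-defeats and context arguments and supply the witness-swapping argument (via antisymmetry of $\le$ on $N$) that rules out a two-cycle between distinct ordinary arguments.
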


\begin{proposition}[Greedy is weakest link]\label{pro-w}
Given a totally ordered hierarchical abstract normative system $\mathcal{H} = \langle L$, $N,  \C , r\rangle$ and the corresponding argumentation framework $\mathcal{F} = (\mbox{\it Arg}(\mathcal{H})$, $ \mbox{\it Def}(\mathcal{H},\succeq_w))$, it holds that $\{\mbox{\it Greedy}(\mathcal{H}) \} = \Outfamily(\mathcal{F}, \mbox{\it stb})$. 
\vskip \baselineskip
\noindent
{\bf Proof.} 
Our proof is constructive: given the Greedy extension, we show how to construct a stable extension of the argumentation framework whose conclusions coincide with the Greedy extension. Since the argumentation framework has only one extension (Lemma 4.2), this completes the proof.

By Definition 2.4., $\mbox{\it Greedy}(\mathcal{H})$ is the set of elements $x$ such that there is a path in $\mathcal{H} $ from an element in $\C$ to $x$ with respect to a set of norms $R\subseteq N$, inductively defined by $R =  \cup^\infty_{i=0} R_i$, where
$R_0= \emptyset$ and  $R_{i+1} =  R_i\cup \mathrm{max}(N,\C, R_i,r)$, $\mathrm{max}(N,\C, R_i,r) = \{u\in \mbox{\it Appl}(N,  \C, R_i) \mid \forall v\in  \mbox{\it Appl}(N,  \C, R_i): r(u) \geq r(v)\}$ and $\mbox{\it Appl}(N, \C, R) := \{(a,x)\in N\setminus R \mid a\in \C \cup$ $\mbox{\it cons}(R)$, $\{x, \overline{x}\}$$\not\subseteq \C \cup \mbox{\it cons}(R)\}$.

Let  $E = \{a \in  \mbox{\it Arg}$  $(\mathcal{H})  \mid a\in \C \mbox{ is a context argument}\}\cup \{[u_1, \dots, u_n]  \in \mbox{\it Arg}(\mathcal{H})\mid  n\ge 1,   \mbox{\it ant}(u_1)\in \C, \{\mbox{\it ant}(u_2)\dots, \mbox{\it ant}(u_n),\mathit{cons}(u_n)\}\subseteq \mbox{\it Greedy}(\mathcal{H}) \}$. 
From the consistency of the extensions (Proposition 3.11) and the construction of Greedy we know that there is a consistent path to any element of $\mbox{\it Greedy}(\mathcal{H})$, and thus for any argument of $\mbox{\it Greedy}(\mathcal{H}) $ there is an ordinary argument with that element as its conclusion, and thus $\mbox{\it Greedy}(\mathcal{H})  = \{concl(\alpha)\mid \alpha \in E, \alpha \mbox{ is an ordinary argument} \}$.
Now we only need to prove that $E$ is a stable extension under weakest link.



We use proof by contradiction. Assume that $E$ is not a stable extension. Given that $\mbox{\it Greedy}(\mathcal{H})$ is consistent, and thus the extension $E$ is conflict free, and given that context arguments defeat all conflicting ordinary arguments and are thus included in $E$, there must be $\alpha = [u_1, \dots, u_n]\in  \mbox{\it Arg}(\mathcal{H})\setminus E$, such that $\alpha$ is not defeated by any argument in $E$.  Since $\alpha\notin E$, there exist $\alpha^\prime = [u_1, \dots, u_{j-1}] \in E$ and $\alpha^{\prime\prime} = [u_1, \dots, u_{j}]\not \in E$. Let $S \subseteq E$ be the set of arguments conflicting with $\alpha''$, thus each of them has a conclusion $\overline{\mathit{cons}(u_{j})}$. So, $\alpha^{\prime\prime}$ defeats each argument $\beta = [v_1,\dots, v_m]$ in $S$ using weakest link, i.e. $\alpha''\succeq_{w} \beta$.
If $\Phi_1 = \{u_1, \dots, u_n\}$ and $\Phi_2 = \{v_1, \dots, v_m\}$,
then according to Def 3.2, $\exists v\in \Phi_2\setminus \Phi_1$ such that $\forall u\in \Phi_1\setminus \Phi_2, v \le u$. 
Let $M\subseteq N$ be the set of all these norms $v$, i.e. all norms that occur in the arguments of $S$ which are smaller than all the norms in $\alpha''$, excluding norms that occur in $\alpha''$ itself.

We now consider the point in the construction of the greedy extension where an element of $M$ was added to it, we consider $\mbox{\it Appl}(N, \C, R)$ and $\mathrm{max}(N,\C, R_i,r)$, and we derive the contradiction. Let $i$ be the lowest index such that $R_{i}$ does not contain a norm of $M$. At this moment, $\mbox{\it Appl}(N, \C, R_i)$ also contains a norm in $\alpha''$, and $\mathrm{max}(N,\C, R_i,r)$ contains an element of $M$. However, by definition, all norms of $M$ are ranked lower than the norms in $\alpha''$. Contradiction. So the assumption is false, i.e. $E$ is a stable extension. This completes the proof.
\end{proposition}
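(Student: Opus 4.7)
My plan is to exploit Lemma 4.2: since the argumentation framework $\mathcal{F}$ under weakest link is acyclic and has a unique stable extension, it suffices to exhibit a single stable set $E$ whose ordinary arguments' conclusions are precisely $\mbox{\it Greedy}(\mathcal{H})$. The natural candidate is to take $E$ to be all context arguments together with every ordinary argument $[u_1,\dots,u_n]$ whose constituent norms all lie in the set $R = \bigcup_i R_i$ produced by Greedy. Because Greedy maintains a consistent set $\mathcal{C}\cup\mbox{\it cons}(R)$ and every path in $R$ from the context has its endpoint in $R(\mathcal{C}) = \mbox{\it Greedy}(\mathcal{H})$, one obtains $\{\mbox{\it concl}(\alpha) \mid \alpha \in E, \alpha \text{ ordinary}\} = \mbox{\it Greedy}(\mathcal{H})$ directly from the definitions.

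Conflict-freeness of $E$ follows from Proposition 3.11 together with the consistency invariant built into the applicability clause of Definition 2.4: no norm producing $\overline{x}$ is ever added once $x \in \mathcal{C}\cup\mbox{\it cons}(R)$, so two arguments in $E$ can never have conflicting conclusions. Hence the substantive content is the stable property: every $\alpha \notin E$ must be defeated by some argument in $E$. For this, I would take the first index $j$ along $\alpha = [u_1,\dots,u_n]$ such that $u_j \notin R$. The prefix $[u_1,\dots,u_{j-1}]$ lies in $E$, so $\mbox{\it ant}(u_j) \in \mathcal{C}\cup\mbox{\it cons}(R)$; hence $u_j$ is eventually applicable, and its omission from $R$ can only be because $\overline{\mbox{\it cons}(u_j)}$ eventually enters $\mathcal{C}\cup\mbox{\it cons}(R)$. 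This guarantees a non-empty set $S \subseteq E$ of arguments attacking $[u_1,\dots,u_j]$.

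To promote attack into defeat under $\succeq_w$, I would argue by contradiction: suppose no $\beta \in S$ defeats $\alpha$. Since $\mathcal{H}$ is totally ordered, $\succ_w$ is a strict total order on the relevant arguments (Proposition 3.3), so each $\beta \in S$ has a witness norm $v_\beta \in \beta \setminus \alpha$ with rank strictly below every norm in $\alpha \setminus \beta$. Let $M$ collect all such $v_\beta$. Now examine the first Greedy step $i \to i+1$ at which some $v \in M$ is added: then $v \in \mathrm{max}(N,\mathcal{C},R_i,r)$, so $r(v) \geq r(u)$ for every $u \in \mbox{\it Appl}(N,\mathcal{C},R_i)$. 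I would show $\mbox{\it Appl}(N,\mathcal{C},R_i)$ contains a norm of $\alpha$ that forces a contradiction: if all of $u_1,\dots,u_{j-1}$ already lie in $R_i$ and $u_j$ is not yet blocked, then $u_j \in \mbox{\it Appl}$ and $r(v) \geq r(u_j)$ contradicts $v \in M$; if some $u_k$ ($k < j$) is the first $\alpha$-norm missing from $R_i$, then $u_k \in \mbox{\it Appl}$ and $r(v) \geq r(u_k)$ contradicts $v \in M$.

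The main obstacle I foresee is handling the case where, at step $i$, the norm $u_j$ is already blocked before any element of $M$ is added—i.e., $\overline{\mbox{\it cons}(u_j)} \in \mbox{\it cons}(R_i)$ already. In that case one needs a sub-induction on the step at which blocking first occurred, producing an earlier argument in $E$ whose defeating status against $\alpha$ can be analysed by the same schema. A secondary delicate point is verifying that the candidate witnesses $u_k$ really lie in $\alpha \setminus \beta$ rather than $\alpha \cap \beta$; this uses the fact that $u_j \notin R$ precludes $u_j \in \beta$, together with a minimality choice of $k$. Once these bookkeeping issues are discharged, uniqueness of the stable extension delivers $\Outfamily(\mathcal{F},\mbox{\it stb}) = \{\mbox{\it Greedy}(\mathcal{H})\}$, which is the desired representation result.
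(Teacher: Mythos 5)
Your proposal is correct and follows essentially the same route as the paper's proof: construct the candidate extension $E$ from the Greedy output, invoke the uniqueness of the stable extension (Lemma~\ref{lemma-acyclic}), establish conflict-freeness via consistency, and derive stability by contradiction through the weakest-link witness norms and the max-rank selection step of Greedy. The two ``obstacles'' you flag are real but benign --- the blocking case cannot occur because any blocking argument already contributes a witness to $M$, and the paper itself glosses over both points --- so your plan, if anything, is slightly more careful than the published argument.
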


Note that Proposition \ref{pro-w} corresponds to Theorem 5.3 of Young \textit{et al}.\cite{Young2016}. This correspondence arises as follows. First, in the argumentation theory for a hierarchical abstract normative system, we use disjoint elitist order  to compare sets of norms, while in the argumentation theory for prioritized default logic, Young \textit{et al.} use a new order called a structure-preference order, which takes into account the structure of how arguments are constructed. Since in the setting of hierarchical abstract normative systems, arguments are acyclic paths, it is not necessary to use the structure-preference order to compare arguments. Second, due to the different ways of constructing argumentation frameworks, the proof of Proposition \ref{pro-w}  differs from that of Theorem 5.3 of Young \textit{et al}. \cite{Young2016}. The former considers the order of the applicability of norms in the proof, while the latter uses the mechanism defined in the structure-preference order. 

\section{Representation result for Reduction}
According to Brewka and Eiter \cite{Brewka}, Reduction is based on the following two points.

\begin{description}
\item[1)] \textit{The application of a rule with nonmonotonic assumptions means jumping to a conclusion. This
conclusion is yet another assumption which has to be used \textit{globally} in the program for the issue of deciding
whether a rule is applicable or not.  }
\item[2)] \textit{The rules must be applied in an order compatible with
the priority information. }
\end{description}
\normalsize

This \textit{global} view of deciding whether a rule is applicable coincides with the last-link principle of lifting a preference relation between rules to a priority relation between resulting arguments.  According to Definition \ref{def:brewka} and the argumentation theory for a hierarchical abstract normative system, we have the following representation result. 

\begin{proposition} [Reduction is last link] \label{prol-be}
Given a totally ordered hierarchical abstract normative system $\mathcal{H} =  \langle L, N,\C,r\rangle$ and the corresponding argumentation framework $\mathcal{F} = ( \mbox{\it Arg}(\mathcal{H})$, $ \mbox{\it Def}(\mathcal{H},\succeq_l))$, it holds that $\mbox{\it Reduction}(\mathcal{H}) =  \Outfamily(\mathcal{F}, \mbox{\it stb})$.  
\vskip \baselineskip
\noindent
{\bf Proof.}
We prove $\mbox{\it Reduction}(\mathcal{H}) =  \Outfamily(\mathcal{F}, \mbox{\it stb})$. by first proving that the left hand side is a subset of the right hand side, and then by proving that the right hand side is a subset of the left hand side.

We first prove $\mbox{\it Reduction}(\mathcal{H}) \subseteq  \Outfamily(\mathcal{F}, \mbox{\it stb})$.

Assume a  set $U\in Reduction(\mathcal{H})$. According to Definition \ref{def:brewka} we have $\mbox{\it Greedy}(\mathcal{H}^U)$ where $\mathcal{H}^X$=$\langle L, N'$,  $\C$, $r'\rangle$, $N' = \{(\top, x_2) \mid (x_1, x_2)\in N, x_1\in \C \cup X\}$ is the set of ordinary norms, and $r'(\top, x_2) = \max(r(x_1, x_2)\mid   (x_1, x_2)\in N, x_1\in \C \cup X)$ for all $(x_1, x_2)\in N$ are priorities over norms.

We now construct a stable extension of $\mathcal{F}$ such that its conclusions are exactly $U$.
Let $E$ be the set of arguments that can be constructed from elements of $U$, 
just like in the proof of Greedy.
So  $E = \{a \in  \mbox{\it Arg}$  $(\mathcal{H})  \mid a\in \C \mbox{ is a context argument}\}\cup \{[u_1, \dots, u_n]  \in \mbox{\it Arg}(\mathcal{H})\mid  n\ge 1,   \mbox{\it ant}(u_1)\in \C, \{\mbox{\it ant}(u_2)\dots, \mbox{\it ant}(u_n),\mathit{cons}(u_n)\}\subseteq U \}$. 
From the consistency of the extensions (Proposition 3.11) and the construction of Reduction we know that there is a consistent path to any element of $U$, and thus for any argument of $U$ there is an ordinary argument with that element as its conclusion, and thus $U = \{concl(\alpha)\mid \alpha \in E, \alpha \mbox{ is an ordinary argument} \}$.
Now we only need to prove that $E$ is a stable extension.

We use proof by contradiction, just like in the proof of Greedy. Assume that $E$ is not stable. Given that context arguments defeat all conflicting ordinary arguments, there must be a $\beta = [v_1, \dots, v_n]\in  \mbox{\it Arg}(\mathcal{H})\setminus E$, such that $\beta$ is not defeated by any argument in $E$.  Since $\beta\notin E$, there exists $\beta^\prime = [v_1, \dots, v_j]$ in $E$ and $\beta^{\prime\prime} = [v_1, \dots, v_{j+1}]$ not in $E$. Let $S \subseteq E$ be the set of arguments attacking $\beta''$, thus each of them has a conclusion $\overline{\mathit{cons}(v_{j+1})}$. Since $\beta''$ is not defeated by any argument in $E$, $\beta^{\prime\prime}$ defeats each argument in $S$. Then, the last link of $\beta''$ is higher than the last links of all arguments of $S$. Consequently, the rank of $v_{j+1}$ is higher than the rank of all norms with consequent $\overline{\mathit{cons}(v_{j+1})}$. But this means that in $N'$, the rank of $(\top, \mathit{cons}(v_{j+1}))$ is higher than the rank of $(\top,\overline{\mathit{cons}(v_{j+1})})$. Then by the construction of Reduction, i.e.,  $\mbox{\it Greedy}(\mathcal{H}^U)$, we have that $\mathit{cons}(v_{j+1})$ must be in $U$, a contradiction. Thus the assumption is false, i.e., $E$ is stable, and this completes the proof that $\mbox{\it Reduction}(\mathcal{H}) \subseteq  \Outfamily(\mathcal{F}, \mbox{\it stb})$.

To complete our proof, we now show that $\mbox{\it Reduction}(\mathcal{H}) \supseteq  \Outfamily(\mathcal{F}, \mbox{\it stb})$. 
Let $E$ be a stable extension of $\mathcal{F} = ( \mbox{\it Arg}(\mathcal{H})$, $ \mbox{\it Def}(\mathcal{H},\succeq_l))$.
Thus $E$ is conflict free, closed under sub-arguments, and for all arguments not in $E$, there is an argument in $E$ defeating it.
In particular, for any $\beta^\prime = [v_1, \dots, v_j]$ in $E$ and $\beta^{\prime\prime} = [v_1, \dots, v_{j+1}]$ not in $E$, there is an argument in $E$ defeating $\beta''$. 

Moreover, let $U = \{concl(\alpha)\mid \alpha \in E, \alpha \mbox{ is an ordinary argument} \}$. From the contextual consistency of stable extensions (Proposition 3.12), we know that $U\cup C$ is consistent. Since defeat is based on last link, it means that for every norm applicable in $U$ whose conclusion $c$ is not in $U$, there is a higher ranked norm applicable in $U$ whose conclusion is $\overline{c}$.  
Consider $\mathcal{H}^U$=$\langle L, N'$,  $\C$, $r'\rangle$, $N' = \{(\top, x_2) \mid (x_1, x_2)\in N, x_1\in \C \cup U\}$ is the set of ordinary norms, and $r'(\top, x_2) = \max(r(x_1, x_2)\mid   (x_1, x_2)\in N, x_1\in \C \cup U)$ for all $(x_1, x_2)\in N$ are priorities over norms. Assume there is an $x\in U$ with $(\top,x),(\top,\overline x)\in N'$. Due to the above the rank of $(\top,x)$ is higher than the rank of $(\top,\overline x)$, and thus $x\in \mbox{\it Greedy}(\mathcal{H}^U)$.

Thus $U=\mbox{\it Greedy}(\mathcal{H}^U)$, i.e. $U\in Reduction(\mathcal{H})$,  and that completes our proof.
\end{proposition}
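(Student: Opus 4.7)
The plan is to establish the set equality $\mbox{\it Reduction}(\mathcal{H}) = \Outfamily(\mathcal{F},\mbox{\it stb})$ by proving the two inclusions separately, with the construction in both directions following the natural recipe ``conclusions of arguments in an extension'' $\leftrightarrow$ ``candidate extension $U$''. The central technical insight I would lean on is that the body-free transform $\mathcal{H}^U$ collapses every chain of norms with antecedents in $\C \cup U$ into a single unconditional norm whose rank is the rank of the top norm of some argument, so that Greedy on $\mathcal{H}^U$ is essentially an argument-wise choice by top-norm rank---which is exactly the last link principle.

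For $\mbox{\it Reduction}(\mathcal{H}) \subseteq \Outfamily(\mathcal{F},\mbox{\it stb})$, I would take $U \in \mbox{\it Reduction}(\mathcal{H})$ and define the candidate extension
\[
E = \{a \in \C\} \cup \bigl\{[u_1,\dots,u_n] \in \mbox{\it Arg}(\mathcal{H}) \mid \mbox{\it ant}(u_1)\in\C,\ \{\mbox{\it ant}(u_2),\dots,\mbox{\it ant}(u_n),\mbox{\it cons}(u_n)\}\subseteq U\bigr\}.
\]
The identification $U = \{concl(\alpha)\mid \alpha\in E\mbox{ is ordinary}\}$ follows from the fact that $U = \mbox{\it Greedy}(\mathcal{H}^U)$ is consistent and every element of $U$ is reachable through a consistent path in $\mathcal{H}$ whose intermediate nodes lie in $\C\cup U$. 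Conflict-freeness of $E$ is immediate from the consistency of $U$. For stability, I would take any $\beta\notin E$, use closure of $E$ under sub-arguments (Proposition 3.10) to locate the first ``bad'' step $\beta'' = [v_1,\dots,v_{j+1}]$, note that $\mbox{\it ant}(v_{j+1})\in \C\cup U$ but $\mbox{\it cons}(v_{j+1})\notin U$, and then appeal to the construction of $\mbox{\it Greedy}(\mathcal{H}^U)$: the norm $(\top,\mbox{\it cons}(v_{j+1}))\in N'$ was rejected, so there must be a higher-ranked body-free norm in $N'$ with opposite conclusion, which unfolds into an argument $\alpha\in E$ whose top norm outranks $v_{j+1}$ and attacks $\beta''$. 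Under last-link, $\alpha$ then defeats $\beta''$ and, by Proposition 3.9, $\beta$.

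For the reverse inclusion $\Outfamily(\mathcal{F},\mbox{\it stb}) \subseteq \mbox{\it Reduction}(\mathcal{H})$, I would take a stable extension $E$, put $U = \{concl(\alpha)\mid \alpha\in E\mbox{ is ordinary}\}$, and verify $U = \mbox{\it Greedy}(\mathcal{H}^U)$ by running the inductive construction of Definition 2.4 on $\mathcal{H}^U$. Contextual consistency (Proposition 3.12) handles the base. For the inductive step I would show: whenever $(\top,c)\in N'$ is applicable and $\overline c \notin \C\cup \mbox{\it cons}(R_i)$, we have $c\in U$; and whenever $(\top,c),(\top,\overline c)\in N'$, the one with higher $r'$-rank puts its conclusion in $U$. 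Both facts hinge on the ``max'' in the definition of $r'$ witnessing a specific argument: the rank $r'(\top,c)$ equals $r(u_n)$ for some ordinary argument with top norm $u_n$ and antecedents in $\C\cup U$, and last-link defeat directly compares precisely these top-norm ranks. Hence the stability of $E$ forces the greedy choice in $\mathcal{H}^U$ to agree with $U$.

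I expect the main obstacle to be the second direction, specifically the interaction between the $\max$ aggregation in $r'$ and the presence of \emph{multiple} ordinary arguments supporting the same conclusion. One must check that the argument witnessing the maximum is itself realisable within $E$ (its chain of antecedents really does sit in $\C\cup U$), and symmetrically that the defeated side is witnessed by an actual argument of $E$; this is where the consistency of $U$ and closure under sub-arguments (Propositions 3.10--3.12) must be combined carefully. A secondary concern is subtle: in $\mathcal{H}^U$ the antecedents of norms are erased, so one cannot appeal to any chained applicability---only the flat one-step Greedy applies---and I would therefore phrase the argument in terms of first-bad-step in $E$ rather than in terms of full paths, as above.
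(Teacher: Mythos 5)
Your proposal is correct and follows essentially the same route as the paper's proof: the same two-inclusion decomposition, the same construction of the candidate stable extension $E$ from $U$ (and of $U$ from $E$), and the same key observation that last-link defeat compares exactly the top-norm ranks that the $\max$ in $r'$ assigns to the body-free norms of $\mathcal{H}^U$. The only cosmetic difference is that you exhibit the defeater of the first bad step $\beta''$ directly where the paper argues by contradiction, and you explicitly flag the realisability-of-the-max-witness subtlety that the paper passes over silently.
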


Now, let us consider the following three examples, which show that by Reduction, a hierarchical abstract normative system and the corresponding argumentation framework may have a unique extension, multiple extensions, or an empty extension.  

\begin{example}[Order puzzle, Reduction] \label{Brewka-Eiter}
Consider Example  \ref{Ptc} when the last link principle is applied, $A_2$ defeats $A_3$. Then, we have $\Outfamily(\mathcal{F}_1,$ $\mbox{\it stb}) = \{\{h,o\}\}$, which is equal to $\mbox{\it Reduction}(\mathcal{H})$.
\end{example}

\begin{figure}[t]
\centering
\includegraphics[width=0.65\textwidth]{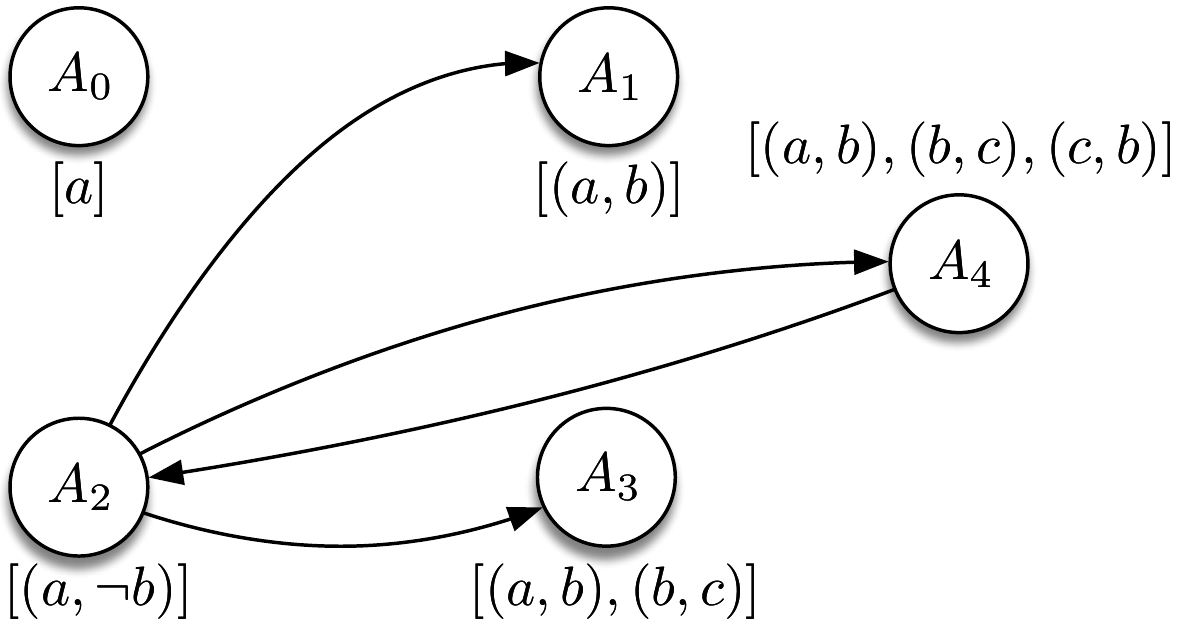}
\caption{\label{fig:poke2Arg} The argumentation framework obtained for the hierarchical abstract normative system of Figure \ref{fig:poke2}.}
\end{figure}

\begin{example}[Reduction, Multiple extensions]
Consider the hierarchical abstract normative system of Figure \ref{fig:poke2} when the last link principle is applied. We obtain the argumentation framework shown in Figure \ref{fig:poke2Arg}, written as $\mathcal{F}_3$, yielding $\Outfamily(\mathcal{F}_3,\mbox{\it stb})=\{\{b,c\},\{\neg b\}\}$. Note that here, we have two distinct stable extensions.
\end{example}

Since stable extensions do not necessarily exist for all argumentation frameworks, the Reduction of a hierarchical abstract normative system might not exist.

\begin{figure}[t]
\centering
\includegraphics[width=0.7\textwidth]{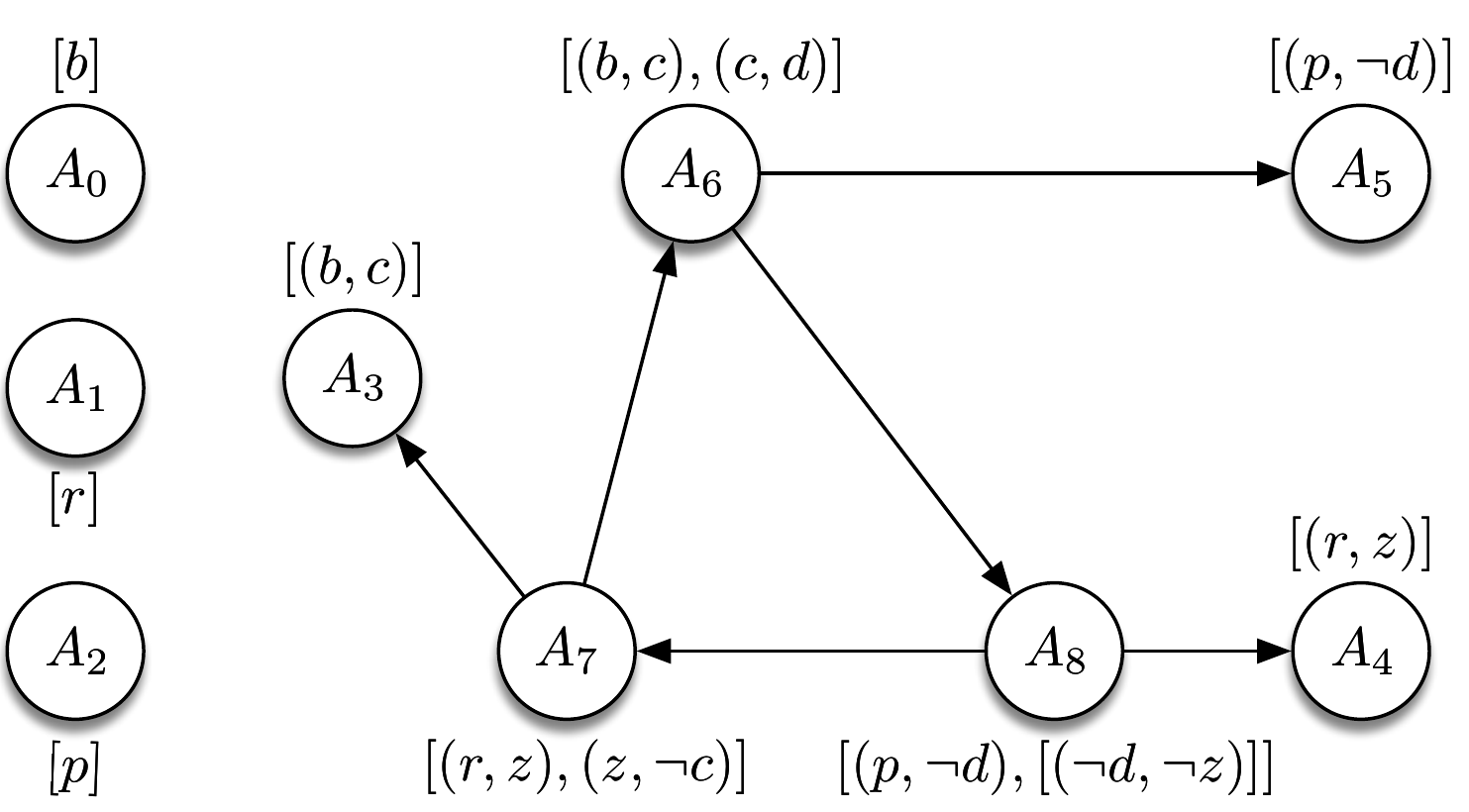}
\caption{\label{fig:noStable} An argumentation framework with no stable extension.}
\end{figure}

\begin{example}[Reduction, Empty extension]
Consider the hierarchical abstract normative system $\mathcal{H}=\langle L,N,C,r \rangle$ where $N=\{(c,d),(p,\neg d),(z,\neg c),(\neg d, \neg z), (r,z),(b,c)\}$,  $C=\{b,r,p\}$ and $r(c,d)=5$, $r(p,\neg d)=4$, $r(z,\neg c)=6$, $r(\neg d, \neg z)=2$, $r(r,z)=1$ and $r(b,c)=0$. When the last link principle is applied, this hierarchical abstract normative system yields the argumentation framework shown in Figure \ref{fig:noStable}, which has no stable extension. 
\end{example}

\section{Representation result for Optimization}\label{sec:minimal}


In principle, Optimization 
is realized by adding norms in order of priority which are consistent with the context, until no more norms can be added, obtaining a maximal set of obeyable norms, following which conclusions can be computed. For each norm that does not belong to the maximal set of obeyable norms, if it is in a consistent path of a hierarchical abstract normative system $\mathcal{H}$, it is the weakest norm of this path.
In terms of the terminology of argumentation, a consistent path of $\mathcal{H}$ is an argument of the corresponding argumentation framework. Hence, for an ordinary argument, if its top norm is the weakest norm that does not belong to the maximal obeyable set, then this argument should not be accepted. In other words, this argument should be defeated by some accepted argument. Furthermore, when an argument is rejected, all its super-arguments should be rejected.  
For instance, as illustrated in Figure 1, the maximal obeyable set of norms is $\{(h, o), (w, \neg o)\}$. The norm $(w, h)$ does no belong to this set. So, the arguments containing $(w, h)$, i.e., $[(w,h)]$ and $[(w,h), (h,o)]$, should be rejected.

In this paper, given an argument $\alpha$, a sub-argument of $\alpha$ is called the \textit{weakest sub-argument} of $\alpha$, if it is an ordinary argument and its top norm is the weakest norm of $\alpha$. 

\begin{definition}[Weakest sub-argument]\label{ws}
Given a totally ordered hierarchical abstract normative system $\mathcal{H} = \langle L$, $N,  \C , r\rangle$ and the corresponding argumentation framework $\mathcal{F} = (\mbox{\it Arg}(\mathcal{H})$, $ \mbox{\it Def}(\mathcal{H},\succeq_w))$, for all $\alpha = [u_1, \dots, u_n]\in \mbox{\it Arg}(\mathcal{H})$, if $u_i$ is the weakest norm of $\alpha$, then the ordinary sub-argument of $\alpha$ whose top norm is $u_i$ is called the weakest sub-argument of $\alpha$. 
\end{definition}

Then, the set of super-arguments of $\alpha^\prime$ ---the weakest sub-argument of $\alpha$--- is called the weakest arguments with respect to $\alpha$, formally defined as follows.

\begin{definition}[Weakest argument]\label{wsub}
Given a totally ordered hierarchical abstract normative system $\mathcal{H} = \langle L$, $N,  \C , r\rangle$ and the corresponding argumentation framework $\mathcal{F} = (\mbox{\it Arg}(\mathcal{H})$, $ \mbox{\it Def}(\mathcal{H},\succeq_w))$, for all $\alpha = [u_1, \dots, u_n]\in \mbox{\it Arg}(\mathcal{H})$, let $\alpha^\prime$ be the weakest sub-argument of $\alpha$. Then, the set of weakest arguments with respect to $\alpha$ is defined as $\mathit{warg}(\alpha) = \mathit{sup}(\alpha^\prime)$.
\end{definition}

The notion of weakest arguments can be illustrated by the following example.

\begin{example}[Order puzzle, weakest arguments]
Consider the argumentation framework in Figure 2. When applying the weakest link principle, $A_3$ defeats $A_2$. According to Definition \ref{wsub}, $\mathit{warg}(A_1) = \{A_1\}$, $\mathit{warg}(A_2) = \{A_1, A_2\}$, and $\mathit{warg}(A_3) = \{A_3\}$.
\end{example}

Based on the concept of weakest arguments, when an argument $\alpha$ defeats another argument $\beta$ according to the weakest link principle, there are two different cases, namely that $\alpha$ is in the set of weakest arguments with respect to $\beta$, or that it is not.

First, consider the case where $\alpha$ is in $\mathit{warg}(\beta)$.  In this case, it means that the weakest norm of $\beta$ is also the weakest norm of $\alpha$. 

\begin{proposition}[Shared weakest norm]
Given $\mathcal{F} = (\mbox{\it Arg}(\mathcal{H})$, $ \mbox{\it Def}(\mathcal{H},\succeq_w))$, for all $(\alpha, \beta) \in \mbox{\it Def}(\mathcal{H},\succeq_w)$, if $u$ is the weakest norm of $\beta$ and $\alpha\in \mathit{warg}(\beta)$, then $u$ is the weakest norm of $\alpha$. 
\end{proposition}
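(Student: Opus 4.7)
My approach is to exploit the two hypotheses separately: the membership $\alpha\in\mathit{warg}(\beta)$ gives me structural information about which norms must appear in $\alpha$, while the defeat $(\alpha,\beta)\in\mbox{\it Def}(\mathcal{H},\succeq_w)$ supplies the rank inequality needed to bound the remaining norms of $\alpha$.

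First I would unpack the hypothesis $\alpha\in\mathit{warg}(\beta)=\mathit{sup}(\beta')$, where $\beta'$ is the weakest sub-argument of $\beta$ and has top norm $u$. Being a super-argument of the ordinary argument $\beta'$, $\alpha$ is itself ordinary and contains every norm of $\beta'$, so in particular $u$ is among the norms of $\alpha$. It remains to show that every other norm of $\alpha$ has rank at least $r(u)$.

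To that end, I would pick an arbitrary norm $w$ of $\alpha$, set $\Phi_1$ and $\Phi_2$ to be the norm-sets of $\alpha$ and $\beta$ as in Definition \ref{def-w-l}, and split on whether $w\in\Phi_2$. If $w\in\Phi_1\cap\Phi_2$, then $r(u)\le r(w)$ is immediate, since $u$ was assumed to be the weakest norm of $\beta$. If instead $w\in\Phi_1\setminus\Phi_2$, I invoke the defeat: because $\alpha$ is ordinary, Definition \ref{defeat} forces $\alpha\succeq_w\beta$, and Definition \ref{def-w-l} then produces some $v\in\Phi_2\setminus\Phi_1$ with $v\le w'$ for every $w'\in\Phi_1\setminus\Phi_2$; specialising to $w$ gives $v\le w$, and since $v$ lies in $\beta$ the weakest-norm property of $u$ in $\beta$ yields $r(u)\le r(v)\le r(w)$. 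Combining the two cases shows $u$ achieves the minimum rank in $\alpha$, so $u$ is the weakest norm of $\alpha$.

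The main obstacle I anticipate is only a small bookkeeping point: one has to rule out the possibility that the defeat arises from the context-argument clause of Definition \ref{defeat}, which would not give $\alpha\succeq_w\beta$. This is handled by the observation above that $\alpha$ must be ordinary because it has the ordinary argument $\beta'$ as a sub-argument. Once that check is made, the proof reduces to the case split and a two-step transitivity, and no further machinery is required.
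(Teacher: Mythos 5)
Your proof is correct and follows essentially the same route as the paper's: both arguments turn on the observation that any norm of $\alpha$ ranked strictly below $u$ would lie outside $\beta$ (since $u$ is weakest in $\beta$) and would therefore contradict $\alpha\succeq_w\beta$, which the defeat requires. The paper phrases this as a short proof by contradiction, while you argue directly via a case split on shared versus non-shared norms and additionally make explicit two points the paper leaves implicit, namely that $u$ occurs in $\alpha$ because $\alpha$ is a super-argument of the weakest sub-argument of $\beta$, and that $\alpha$ is ordinary so the context-argument clause of the defeat definition cannot apply.
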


\begin{proof}
We use proof by contradiction. Assume that $u$ is not the weakest norm of $\alpha$. Then, there exists $u^\prime$ such that $u > u^\prime$. Since $u^\prime$ does not belong to $\beta$, it holds that $\beta \succ_w \alpha$. This contradicts $(\alpha, \beta) \in \mbox{\it Def}(\mathcal{H},\succeq_w)$. Thus the assumption is false, i.e., $u$ is the weakest norm of $\alpha$. 
\end{proof}

Given that $\alpha$ and $\beta$ share a weakest norm, according to the definition of Optimization, this weakest norm does not belong to the maximal obeyable set. Hence, both $\alpha$ and $\beta$ cannot be accepted. In this paper, we use an auxiliary argument $\mathrm{aux}$ to defeat all arguments in $\mathit{warg}(\beta)$ that should not be accepted. 

The second case occurs when $\alpha$ is not in $\mathit{warg}(\beta)$. Here, the weakest norm of $\alpha$ is superior to the weakest norm of $\beta$. So, if $\alpha$ is accepted, then all arguments in $\mathit{warg}(\beta)$ should be rejected. For this purpose, we may use $\alpha$ to defeat each argument in $\mathit{warg}(\beta)$. In addition, since the defeat from $\alpha$ to each super-argument of $\beta$ is already in $\mbox{\it Arg}(\mathcal{H})$, only the arguments in $\mathit{warg}(\beta)\cap \mathit{psub}(\beta)$ should be added to $\mbox{\it Arg}(\mathcal{H})$. 

According to the above observations, an expanded argumentation framework of $\mathcal{F}$ with auxiliary defeats on weakest arguments is defined as follows.  

\begin{definition}[Expanded argumentation framework with additional defeats on weakest arguments]\label{def-ean}
Let $\mathcal{F} = (\mbox{\it Arg}(\mathcal{H})$,$ \mbox{\it Def}(\mathcal{H},\succeq_w))$ be an argumentation framework  that is constructed from a totally ordered hierarchical abstract normative system $\mathcal{H} = \langle L$, $N,  \C , r\rangle$, and $\mathrm{aux}$ be an auxiliary argument such that $\mathrm{aux}\notin \mbox{\it Arg} (\mathcal{H})$. The expanded argumentation framework of $\mathcal{F}$ with auxiliary defeats on weakest arguments is $\mathcal{F}^\prime = (\mbox{\it Arg}(\mathcal{H})\cup \{\mathrm{aux}\}$,  $ \mbox{\it Def}(\mathcal{H},\succeq_w)\cup \Phi_1\cup \Phi_2)$ where $\Phi_1 = \cup_{(\alpha,\beta)\in \mbox{\it Def}(\mathcal{H},\succeq_w)\wedge  \alpha\notin \mathit{warg}(\beta)} \{(\alpha, \gamma)\mid \gamma\in \mathit{warg}(\beta)\cap \mathit{psub}(\beta)\}$, and $\Phi_2 = \cup_{ (\alpha,\beta)\in \mbox{\it Def}(\mathcal{H},\succeq_w)\wedge  \alpha\in \mathit{warg}(\beta)} \{(\mathrm{aux},  \gamma)\mid \gamma\in \mathit{warg}(\beta)\}$. 
\end{definition}

In Definition \ref{def-ean}, $\Phi_1$ is the set of defeats from $\alpha$ that is not a weakest argument with respect to $\beta$, and $\Phi_2$ is the set of defeats from the auxiliary argument $\mathrm{aux}$ when $\alpha$ is a weakest argument with respect to $\beta$.  


The following lemma and proposition show that Optimization can be represented in formal argumentation by using weakest link  together with auxiliary defeats.

\begin{lemma}[Unique extension of Optimization]\label{lem-acy-ex}
Let $\mathcal{H} = \langle L$, $N,  \C , r\rangle$ be a totally ordered hierarchical abstract normative system, and  $\mathcal{F}^\prime$ be an argumentation framework of $\mathcal{H}$ with additional defeats on weakest arguments presented in Definition \ref{wsub}. It holds that 
$\mathcal{F}^\prime$ is acyclic, and therefore has a unique extension under stable semantics.
\vskip \baselineskip
\noindent
{\bf Proof.} 
According to Lemma \ref{lemma-acyclic}, $\mathcal{F}$ is acyclic. The addition of $\Phi_2$ does not produce cycles. Meanwhile, for all $(\alpha, \gamma)\in \Phi_1$, since $\alpha$ defeats $\beta$ and $\gamma$ is a weakest argument with respect to $\beta$,  $\alpha\succeq_w \gamma$. So, the addition of $\Phi_2$ does not produce cycles either. As a result, $\mathcal{F}^\prime$ is acyclic, and therefore has a unique extension under stable semantics.
\end{lemma}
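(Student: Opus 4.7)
The plan is to leverage the acyclicity of the underlying framework $\mathcal{F}$ from Lemma~\ref{lemma-acyclic} and to show separately that each of the two augmentations, $\Phi_1$ and $\Phi_2$, preserves acyclicity. Once $\mathcal{F}^\prime$ is shown to be acyclic and finite, the second clause follows from the classical result that any such argumentation framework admits a single complete extension, which is then simultaneously grounded, preferred, and stable.

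First I would dispose of $\Phi_2$. By construction the auxiliary argument $\mathrm{aux}$ is fresh, appears only as the source of edges in $\Phi_2$, and is never the target of any edge in $\mbox{\it Def}(\mathcal{H},\succeq_w)\cup\Phi_1\cup\Phi_2$. Consequently no directed cycle of $\mathcal{F}^\prime$ can traverse $\mathrm{aux}$, so adding $\Phi_2$ to an acyclic graph preserves acyclicity.

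Next I would address $\Phi_1$. For every $(\alpha,\gamma)\in\Phi_1$ there is a witness $(\alpha,\beta)\in\mbox{\it Def}(\mathcal{H},\succeq_w)$ with $\alpha\notin\mathit{warg}(\beta)$ and $\gamma\in\mathit{warg}(\beta)\cap\mbox{\it psub}(\beta)$. Because $\mathcal{H}$ is totally ordered, the argument inside the proof of Lemma~\ref{lemma-acyclic} upgrades every weakest-link defeat to strict, so $\alpha\succ_w\beta$. I would then establish the key intermediate claim that this strictness transfers to $\gamma$: $\alpha\succ_w\gamma$. The reason is that $\gamma$ is a sub-argument of $\beta$ still containing $\beta$'s weakest norm, so the weakest norm of $\gamma$ coincides with that of $\beta$; the hypothesis $\alpha\notin\mathit{warg}(\beta)$ guarantees that this weakest norm still lies in the distinct part of the partition when $\alpha$ is compared with $\gamma$, so the witness responsible for $\alpha\succ_w\beta$ in Definition~\ref{def-w-l} can be recycled as a witness for $\alpha\succ_w\gamma$. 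Every edge in $\mbox{\it Def}(\mathcal{H},\succeq_w)\cup\Phi_1$ therefore points from a strictly larger to a strictly smaller argument in the order $\succ_w$, which is transitive by Proposition~\ref{prop-transitive} and irreflexive by construction. A directed cycle among these edges would then yield $\alpha\succ_w\alpha$, a contradiction. Combined with the $\Phi_2$ analysis this proves $\mathcal{F}^\prime$ is acyclic, and the finite-acyclic uniqueness result supplies the unique stable extension.

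The step I expect to be the main obstacle is the intermediate claim $\alpha\succ_w\gamma$. Justifying it requires unpacking Definition~\ref{def-w-l} for the pair $(\alpha,\gamma)$ and tracking which norms migrate between the shared and the distinct sides of the partition when $\beta$ is replaced by its sub-argument $\gamma$. The hypothesis $\alpha\notin\mathit{warg}(\beta)$ is doing precisely the work of keeping the weakest norm of $\beta$ out of the shared part, so that the original witness continues to witness the strict comparison between $\alpha$ and $\gamma$; I would also want to double-check that no degenerate case (e.g.\ $\gamma$ being a context argument, or $\alpha$ and $\gamma$ being norm-disjoint) breaks this reuse.
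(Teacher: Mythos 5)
Your proof takes essentially the same route as the paper's: acyclicity of $\mathcal{F}$ via Lemma~\ref{lemma-acyclic}, the $\Phi_2$ edges harmless because $\mathrm{aux}$ is a fresh node that only ever appears as a source, and the $\Phi_1$ edges harmless because each added pair $(\alpha,\gamma)$ still satisfies $\alpha\succeq_w\gamma$, so transitivity of $\succ_w$ rules out any new cycle. You in fact give more detail than the paper on the key claim $\alpha\succ_w\gamma$ (the paper simply asserts $\alpha\succeq_w\gamma$); the one point to double-check --- which you yourself flag --- is that $\alpha\notin\mathit{warg}(\beta)$ is a \emph{structural} condition (not being a super-argument of $\beta$'s weakest sub-argument) and does not by itself guarantee that $\beta$'s weakest norm is absent from $\alpha$'s norm \emph{set}, which is what your ``recycled witness'' step implicitly uses.
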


\begin{proposition}[Optimization is weakest link plus auxiliary defeats]\label{prop:wcp}
Let $\mathcal{H} = \langle L$, $N,  \C , r\rangle$ be a totally ordered hierarchical abstract normative system, and  $\mathcal{F}^\prime$ be an argumentation framework of $\mathcal{H}$ with additional defeats on weakest arguments presented in Definition \ref{wsub}. 
It holds that $\mbox{\it Optimization}(\mathcal{H})  =  \{concl(\alpha)\mid \alpha \in E\setminus\{\mathrm{aux}\}, \alpha \mbox{ is an ordinary argument} \} $  where $E$ is the unique stable  extension of $\mathcal{F}^\prime$.


\vskip \baselineskip
\noindent
{\bf Proof.} 
This proof is similar to that of Greedy: given the Optimization extension, we show how to construct a stable extension of the expanded argumentation framework whose conclusions coincide with the Optimization extension. Since the argumentation framework has only one extension (Lemma \ref{lem-acy-ex}), this completes the proof.

By Definition \ref{minviolation}, $\mbox{\it Optimization}(\mathcal{H})$ is the set of elements $x$ such that there is a path in $\mathcal{H}$ from an element in $\C$ to $x$ with respect to a set of norms $R\subseteq N$. Given $T = (u_1, u_2, \dots, u_n)$ the linear order on $N$ such that $u_1>u_2>\dots>u_n$, $R$ is inductively defined by $R = R_n$, where
$R_0= \emptyset$;  $R_{i+1} =  R_i\cup \{u_i\}$ if $C \cup R(C)\mbox{ is consistent}$ where $R=R_i \cup \{u_i\}$, and $R_{i+1} = R_i$ otherwise. 

Let  $E = \{\mathrm{aux}\}\cup \{\alpha \in  \mbox{\it Arg}$  $(\mathcal{H})  \mid \alpha\in \C \mbox{ is a context argument}\}\cup \{[u_1, \dots, u_n]  \in \mbox{\it Arg}(\mathcal{H})\mid  n\ge 1,   \mbox{\it ant}(u_1)\in \C, \{\mbox{\it ant}(u_2)\dots, \mbox{\it ant}(u_n),\mathit{cons}(u_n)\}\subseteq \mbox{\it Optimization}(\mathcal{H}) \}$. 
From the consistency of the extensions (Proposition 3.11) and the construction of Optimization we know that there is a consistent path to any element of $\mbox{\it Optimization}(\mathcal{H})$, and thus for any argument of $\mbox{\it Optimization}(\mathcal{H}) $ there is an ordinary argument with that element as its conclusion, and thus $\mbox{\it Optimization}(\mathcal{H})  = \{concl(\alpha)\mid \alpha \in E\setminus \{\mathrm{aux}\}, \alpha \mbox{ is an ordinary argument} \}$.
  
Now we only need to prove that $E$ is a stable extension of $\mathcal{F}^\prime$.
 
We use proof by contradiction. So assume that $E$ is not a stable extension. On the one hand, given that $\mbox{\it Optimization}(\mathcal{H})$ is consistent, and thus the extension $E$ is conflict free, and given that context arguments defeat all conflicting ordinary arguments and are thus included in $E$, there must be $\alpha = [u_1, \dots, u_n]\in  \mbox{\it Arg}(\mathcal{H})\setminus E$, such that $\alpha$ is not defeated by any argument in $E$.  
On the other hand, since $\alpha\notin E$, there exists a weakest norm $u_j$ of $\alpha$ such that $u_j \notin R$.  Let $i$ be the index where $u_j$ can not be added to $R_i$ since $\C \cup R(\C)$  is consistent  where $R=R_{j+1}=R_i \cup \{u_j\}$.   Let   $\alpha^\prime = [u_1, \dots, u_k]$ where $k\ge j$, and $\beta = [v_1, \dots, v_m]$ such that  $\{v_1, \dots, v_m\} \subseteq R_{i+1}$ and $v_m = \overline{u_k}$. Since $u_j<v$ for all $v\in R_i$, it holds that $\beta$ defeats $\alpha^\prime$. Then, we have the following two possible cases. If $u_j\in \{v_1, \dots, v_m\}$, then $\alpha$ is defeated by $\mathrm{aux}\in E$. Contradiction. Otherwise, $u_j\notin \{v_1, \dots, v_m\}$.  In this case, $\{v_1, \dots, v_m\}\subseteq R_{i}$ and $\C \cup R(\C)$  is consistent where $R= R_{i}$. So,  $\beta$ is in $E$. Contradiction. So the assumption is false, i.e., $E$ is a stable extension. This completes the proof.
\end{proposition}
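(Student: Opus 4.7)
The plan is to mimic the structure of the Greedy and Reduction proofs: by Lemma 6.4 the expanded framework $\mathcal{F}^\prime$ has a unique stable extension, so it suffices to exhibit one stable extension of $\mathcal{F}^\prime$ whose ordinary-argument conclusions (ignoring $\mathrm{aux}$) equal $\mbox{\it Optimization}(\mathcal{H})$. I would define the candidate
$E := \{\mathrm{aux}\} \cup \{a : a \in \C\} \cup \{[w_1, \dots, w_k] \in \mbox{\it Arg}(\mathcal{H}) : \{w_1, \dots, w_k\} \subseteq R\}$,
where $R = R_n$ is the maximal obeyable set built in Definition~\ref{minviolation}. Consistency of $\C \cup R(\C)$, which is an invariant of the Optimization construction, then implies that the ordinary-argument conclusions of $E$ are exactly $R(\C) = \mbox{\it Optimization}(\mathcal{H})$.

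Conflict-freeness of $E$ splits according to the three kinds of defeats in $\mathcal{F}^\prime$. No defeat between two ordinary arguments of $E$ is possible, since that would place some pair $x, \overline{x}$ in $\C \cup R(\C)$ and break the Optimization invariant. For $\Phi_2$ edges, which originate only at $\mathrm{aux}$, the target contains a norm $u^\ast$ that is the shared weakest norm of some attacking pair $(\alpha,\beta)$ with $\alpha\in\mathit{warg}(\beta)$; such a $u^\ast$ lies on two conflicting paths from $\C$ and therefore cannot be in $R$, so the target is outside $E$. For a $\Phi_1$ edge $(\alpha,\gamma)$ with $\alpha\in E$, the target $\gamma$ contains the weakest norm of $\beta$, a norm strictly weaker than the weakest norm of $\alpha$; the analogous Optimization-step argument shows this norm is again forbidden from $R$, so once more $\gamma \notin E$.

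For defeat-completeness, pick any $\alpha = [u_1,\dots,u_n] \notin E$ and let $u_j$ be the weakest-rank norm of $\alpha$ that is \emph{not} in $R$. At the step in the enumeration $T$ when Optimization considered $u_j$, the consistency test $\C \cup (R_{\text{prev}} \cup \{u_j\})(\C)$ failed, so there is a path $\beta = [v_1,\dots,v_m]$ with $\{v_1,\dots,v_m\} \subseteq R_{\text{prev}} \cup \{u_j\}$ ending at $\overline{\mathit{cons}(u_k)}$ for some sub-argument $\alpha'' = [u_1,\dots,u_k]$ of $\alpha$ with $k\ge j$. If $u_j \notin \beta$, then $\beta$ is built entirely from $R$ and so $\beta \in E$; since every norm of $\beta$ has rank strictly above $u_j$, the weakest link gives $\beta \succeq_w \alpha''$, and the defeat propagates to $\alpha$ either via Proposition~\ref{prop-defeat-sup} when $\alpha''=\alpha$ or via the $\Phi_1$ edge added in Definition~\ref{def-ean} when $k<n$. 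If instead $u_j \in \beta$, then $u_j$ is a shared weakest norm of $\alpha$ and $\beta$, the pair $(\beta,\alpha)$ falls under the $\Phi_2$ clause of Definition~\ref{def-ean}, and $\mathrm{aux}$ directly defeats every element of $\mathit{warg}(\alpha)$, in particular $\alpha$ itself.

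The main obstacle is the bookkeeping that glues the conflict-freeness and defeat-completeness arguments together: one needs the invariant that a norm which is a \emph{shared} weakest norm between two conflicting ordinary arguments (or the unique norm that blocked its own addition at its Optimization step) cannot belong to $R$, and one needs to verify, in the $\Phi_1$ subcase above, that $\beta \notin \mathit{warg}(\alpha'')$ so that the promised edge is actually in $\Phi_1$ rather than $\Phi_2$. This amounts to chasing ranks along the enumeration $T$ and the norm sequences of $\alpha$, $\alpha''$, and $\beta$; once it is nailed down, stability of $E$ is established and Lemma~\ref{lem-acy-ex} collapses $E$ to the unique stable extension, yielding the representation equality.
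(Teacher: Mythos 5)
Your proposal follows essentially the same route as the paper's proof: invoke Lemma~\ref{lem-acy-ex} to reduce the task to exhibiting one stable extension, build the candidate $E$ from the maximal obeyable set $R$, and establish stability by locating, for each excluded argument, its weakest norm $u_j\notin R$, the blocking path $\beta$ from the failed consistency test, and the same two-case split on whether $u_j$ occurs in $\beta$ (yielding a defeat by $\mathrm{aux}$ via $\Phi_2$ or by $\beta\in E$ via $\Phi_1$/Proposition~\ref{prop-defeat-sup}). The only notable difference is that you spell out conflict-freeness against the auxiliary defeats $\Phi_1$ and $\Phi_2$ explicitly (the paper leaves this implicit), while deferring the same rank-chasing bookkeeping that the paper also treats only sketchily.
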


Now, let us consider a revised version of the Order puzzle as follows. 

\begin{example}[Order puzzle, Optimization]\label{ex-orderp}
Let $\mathcal{H}^\prime = \langle L, N, \C, r\rangle$ be a hierarchical abstract normative system, where 
$L = \{w, h, o, \neg w, \neg h, \neg o, \top\}$, $N  = \{(w,h), (w, \neg h)$, $(h, o), (\neg h, o), (w, \neg o)\} $, $\C = \{w, \top\}$, $r(w,h) =1$,  $r(w,\neg h) =0$,   
	 $r(h,o) =3$,  $r(\neg h,o) =4$,   
	 $r(w, \neg o) =2$.  
The maximal set of obeyable norms is $R =\{(\neg h,o), (h,o)$, $(w, \neg o)\}$ and so $\mathit{Optimization}(\mathcal{H}')=\{\neg o\}$. Figure \ref{af66} illustrates the argumentation framework obtained from this hierarchical abstract normative system, by adding an auxiliary argument and two auxiliary defeats. In this example, since $A_5$ is neither in $\mathit{warg}(A_3)=\{A_1, A_3\}$ nor in $\mathit{warg}(A_4)=\{A_2, A_4\}$, the two auxiliary defeats are from $A_5$ to $A_1$ and $A_2$ respectively. Then, under stable semantics, the expanded argumentation framework has a unique extension $E= \{\mathrm{aux}, A_0, A_5\}$. As a result, the set of conclusions $\{concl(\alpha)\mid \alpha \in E\setminus\{\mathrm{aux}\}, \alpha \mbox{ is an ordinary argument} \} = \{concl(A_5)\} = \{\neg o\}$. 

\begin{figure}
\centering
\includegraphics[width=0.57\textwidth]{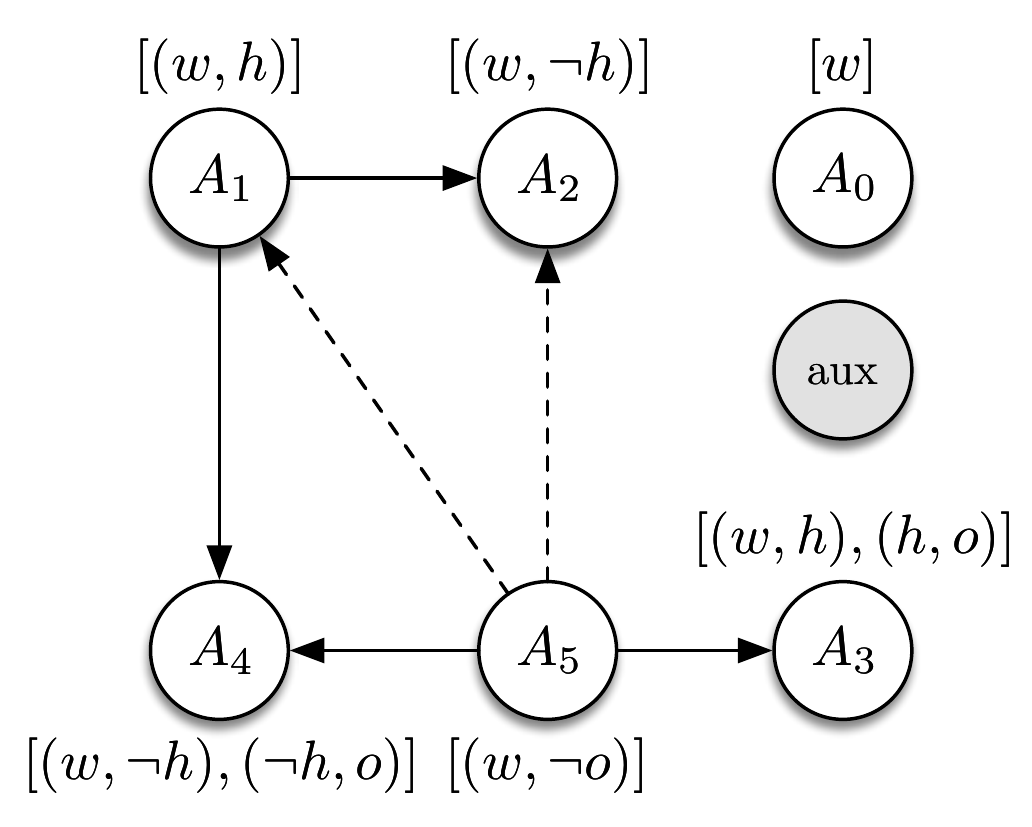}
\caption{\label{af66} Argumentation framework expanded by an auxiliary argument and two auxiliary defeats (denoted using dashed lines).}
\end{figure}
\end{example}

Finally, let us consider an expanded argumentation framework where auxiliary defeats are from the auxiliary argument $\mathrm{aux}$. 

\begin{example}[Empty Optimization]
Let $\mathcal{H} = \langle L, N$,  $\C$, $r\rangle$ be a hierarchical abstract normative system, where 
$L = \{a, b, c, \neg a, \neg b, \neg c, \top\}$, $N  = \{(a,b), (b, c), (b, \neg c)\} $, $\C = \{a, \top\}$, $r(a,b) =1$,   
	 $r(b,c) =2$,    
	 $r(b, \neg c) =3$.  
The maximal set of obeyable norms is $R =\{(b, c), (b, \neg c)\}$ and so $\mathit{Optimization}(\mathcal{H})=\emptyset$. Figure \ref{af65}  illustrates the argumentation framework obtained from this hierarchical abstract normative system, by adding an auxiliary argument and three auxiliary defeats. In this example, since $A_3$ is in $\mathit{warg}(A_2)=\{A_1, A_2, A_3\}$, the three auxiliary defeats are from $\mathrm{aux}$. Then, under stable semantics, the expanded argumentation framework has a unique extension $E= \{\mathrm{aux}, A_0\}$. As a result, the set of conclusions $\{concl(\alpha)\mid \alpha \in E\setminus\{\mathrm{aux}\}, \alpha \mbox{ is an ordinary argument} \} = \emptyset$.     

\begin{figure}
\centering
\includegraphics[width=0.45\textwidth]{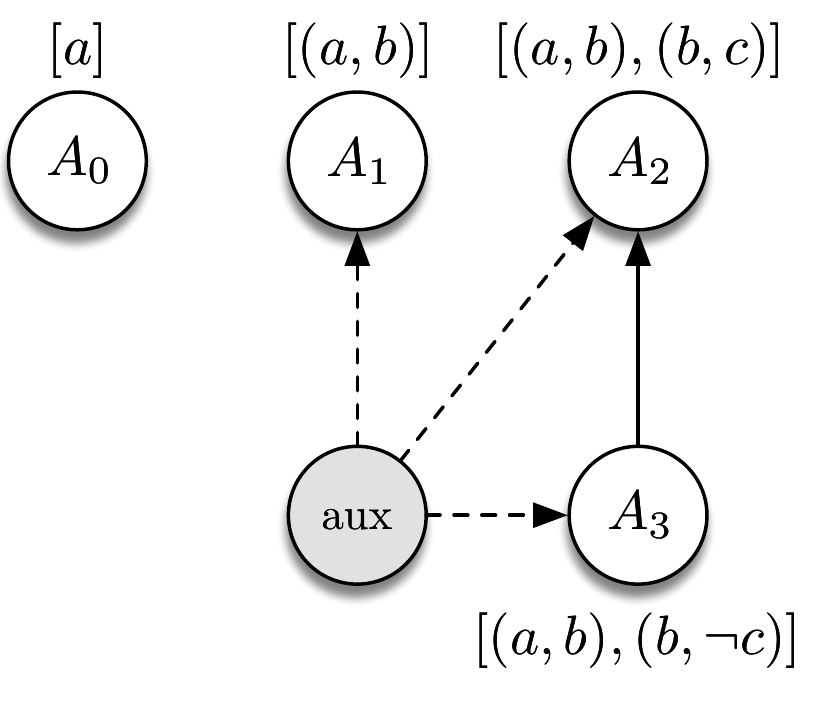} 
\caption{\label{af65} Argumentation framework expanded by an auxiliary argument and three auxiliary defeats (denoted using dashed lines).}
\end{figure}
\end{example}


\section{Discussions and related work} 

In the previous sections, after defining three detachment procedures for totally ordered hierarchical abstract normative systems motivated by the Order puzzle example, we formulated an argumentation theory to represent them. The role of examples in the study of logic has a long and rich history. Traditionally, a logic was proposed to model some example problem, following which examples were introduced to highlight paradoxes or inconsistencies in the logic, whereupon a new logic was proposed to address these problems, and the cycle repeated. While this approach has significantly enriched the field, it is not without problems. For example, there is still a debate regarding deontic detachment 
within the community, as in some cases, deontic detachment intuitively holds, and in other cases it does not \cite{nute97defeasible}. Given this, we do not seek to claim that the detachment procedures we present in this paper are in any sense the `right' logics. Instead, our goal is to answer the following questions.

\begin{enumerate}
\item What are the general properties of systems considered relevant to some problem?
\item Given an application, what choices should be made in order to obtain a solution?
\end{enumerate}

In this paper, the systems we considered are the three different detachment procedures, encoded in the general framework of hierarchical abstract normative systems. The property we considered is then the conclusions that one can draw from each of the detachment procedures in the context of prioritized norms, which we describe in the context of an argumentation system.

Our results then characterize the outputs of  Greedy, Reduction and Optimization in terms of argumentation for a totally ordered hierarchical abstract normative system, allowing one to decide which approach is relevant to their needs by understanding the effects of each approach through the argumentation literature. The semantics associated with each approach also sheds light on the complexity of computing conclusions in the normative context.

Furthermore, it is important to note that our representation results only hold when the hierarchical abstract normative system is totally ordered. This is illustrated by the following example, which shows that Greedy  does not match the results of weakest link when one does not have a total order over preferences.

Let $\mathcal{H}=\langle L, N, \C, r\rangle$ be  a hierarchical abstract normative system, where 
 $N=\{(a,b),(a,c),(b,\neg c),(c, \neg b)\}$
  $C=\{a\}$ and
 $r(a,b)=1,r(a,c)=1,r(b,\neg c)=2$ and $r(c, \neg b)=2\}$. On the one hand, by Greedy, there are two extensions $\{b, \neg c\}$ and $\{c, \neg b\}$.
On the other hand, by the weakest link principle, the argumentation framework constructed from $\mathcal{H}$ is illustrated in Figure \ref{ex-example1-AF-1}. Under stable semantics, there are three extensions $\{A_0, A_1, A_2\}$, $\{A_0, A_1, A_3\}$, and $\{A_0, A_3, A_4\}$. So, there are three conclusion extensions $\{b, \neg c\}$, $\{b, c\}$, $\{c, \neg b\}$. As a result, the set of conclusions obtained by Greedy is not equal to the one obtained by argumentation using the weakest link principle. We leave identifying valid representation results for hierarchical abstract normative systems containing preference preorders as an avenue for future research.  
\begin{figure}
\centering
\includegraphics[width=0.5\textwidth]{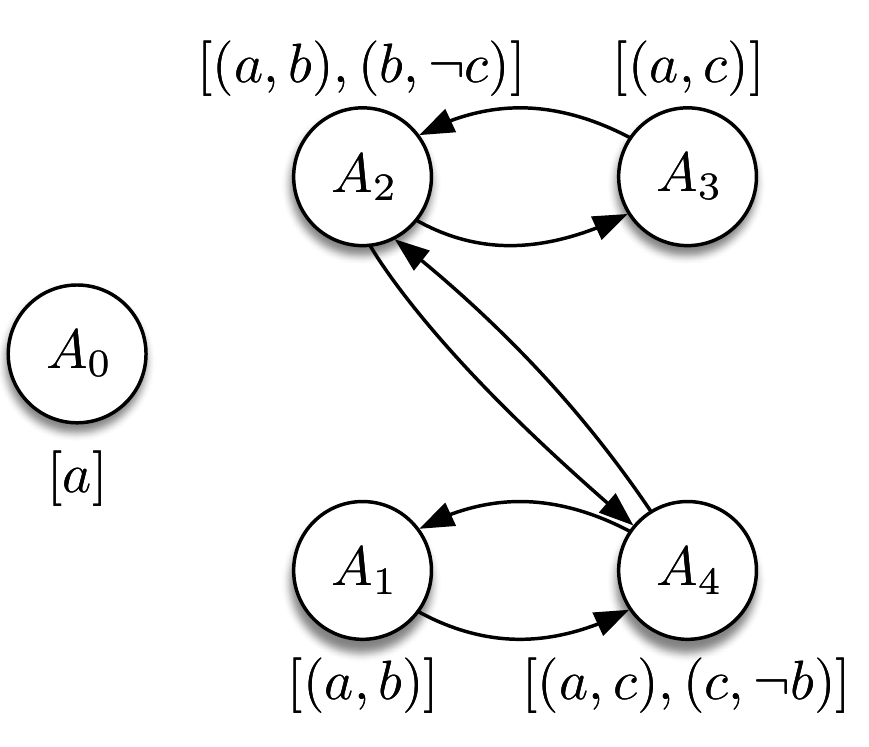}
\caption{\label{ex-example1-AF-1} Argumentation framework constructed from a preordered hierarchical abstract normative system.}
\end{figure}

Regarding related work, Young \textit{et al.}~\cite{Young2016} endowed Brewka's prioritized default logic (PDL) with argumentation semantics using the ASPIC$^+$ framework for structured argumentation~\cite{DBLP:journals/argcom/ModgilP14}. 
More precisely, their goal is to define a preference ordering over arguments $\succsim$, based on the strict total order over defeasible rules defined to instantiate $ASPIC^+$ to PDL, so as to  ensure that an extension within PDL corresponds to the justified conclusions of its $ASPIC^+$ instantiation. Several options are investigated, and they demonstrate that the standard $ASPIC^+$ \textit{elitist} ordering cannot be used to calculate $\succsim$ as there is no correspondence between the argumentation-defined inferences and PDL, and the same holds for a  disjoint elitist preference ordering. The authors come up with a new argument preference ordering definition which captures both preferences over arguments and also \textit{when} defeasible rules become applicable in the arguments' construction, leading to the definition of a strict total order on defeasible rules and corresponding non-strict arguments. Their representation theorem shows that a correspondence always exists between the inferences made in PDL and the conclusions of justified arguments in the $ASPIC^+$ instantiation under stable semantics. 

Brewka and Eiter~\cite{Brewka} consider programs supplied with priority information, which is given by a supplementary strict
partial ordering of the rules. This additional information is used to solve potential conflicts. Moreover, their idea is that conclusions should be only those literals that are contained in at least one answer
set. They propose to use preferences on rules for selecting
a subset of the answer sets, called the \textit{preferred answer sets}. In their approach, a rule is applied unless it is defeated via
its assumptions by rules of higher priorities.
Our definition (Def.~\ref{def:brewka}) and the original formalism of Brewka and Eiter~\cite{Brewka} are different, in the sense that in our definition we do not make use of default negation to represent the exceptions, i.e., the defeasibility, of a (strict) rule. Rather, we use defeasible rules and the notion of the  applicability of such rules. This means that the correct translation of the Order Puzzle of  Example~\ref{pans-1} ends up with the following logic program\footnote{Note that in Nrewka and Eiter's original formulation \cite{Brewka} $\mathtt{r_0 < r_3}$ means that $\mathtt{r_0}$ has higher priority than $\mathtt{r_3}.$}:\\



\noindent $\mathtt{r_0: w.}\\
\mathtt{r_1: h\ \operatorname{:-}\ not\ \neg h, w.}\\
\mathtt{r_2: \neg o\ \operatorname{:-}\ not\  o, w.}\\
\mathtt{r_3: o\ \operatorname{:-}\ not\  \neg o, h.}\\
\mathtt{r_0 < r_3 < r_2 < r_1}\\$

\noindent If preferences are disregarded, then this logic program has two answer sets: $\{w, h, \neg o\}$ and $\{w, h, o\}$. Thus, considering preferences, the latter is the unique preferred answer set. After dropping the context $w$ from the answer set, we get an extension $\{h, o\}$, which is identical to the result obtained in Example~\ref{pans-1c}. 


Dung~\cite{DBLP:journals/ai/Dung16} presents an approach to deal with contradictory conclusions in defeasible reasoning with priorities. More precisely, he starts from the observation that often, the proposed approaches to defeasible reasoning with priorities (e.g.,~\cite{DBLP:conf/ijcai/Brewka89,DBLP:conf/ijcai/SchaubW01,DBLP:journals/ai/ModgilP13}) sanction contradictory conclusions, as exemplified by  $ASPIC^+$ using the weakest link principle together with the elitist ordering which returns contradictory conclusions with respect to its other three attack relations, and the conclusions reached with the well known approach of Brewka and Eiter~\cite{Brewka}. Dung shows then that the semantics for any complex interpretation of default preferences can be characterized by a subset of the set of stable extensions with respect to the normal attack relation assignments, i.e., a normal form for ordinary attack relation assignments. 
In the setting of this paper, the notion of `normal attack relation' could be defined as follows. Let $\alpha = (a_1, \dots, a_n)$ and $\beta= (b_1, \dots, b_m)$ be arguments constructed from a hierarchical abstract normative system. Since we have no Pollock style undercutting argument (as in $\mbox{\it ASPIC}^+$) and each norm is assumed to be defeasible, $\alpha$ is said to normally attack  argument $\beta$ if and only if $\beta$ has a sub-argument $\beta'$ such that $concl(\alpha) = \overline{concl(\beta')}$, and $r((a_{n-1}, a_n))\geq r((b_{m-1}, b_m))$. According to Definitions~\ref{def-w-l} and~\ref{defeat}, the normal defeat relation is equivalent to the defeat relation using the last link principle in this paper.

Kakas \textit{et al.}~\cite{DBLP:conf/cilc/KakasTM14} present a logic of arguments called \textit{argumentation logic}, where the foundations of classical logical reasoning are represented from an argumentation perspective. More precisely, their goal is to integrate into the single argumentative representation framework both classical reasoning, as
in propositional logic, and defeasible reasoning. 

You \textit{et al.}~\cite{DBLP:journals/tkde/YouWY01} define a prioritized argumentative characterization of non-monotonic reasoning, by casting default reasoning as a form of prioritized argumentation. They illustrate how the parameterized formulation of priority may be used to allow various extensions and modifications to default reasoning.

We, and all these approaches, share the idea that an argumentative characterization of NMR formalisms, like prioritized default logic in Young's case and hierarchical abstract normative systems in our approach, contributes to make the inference process more transparent to humans. However, the targeted NMR formalism is different, leading to different challenges in the representation results. To the best of our knowledge, no other approach addressed the challenge of an argumentative characterization of prioritized normative reasoning. 

The reason  we study prioritized normative reasoning \textit{in the setting of formal argumentation} is twofold. First, formal argumentation has been recognized as a popular research area in AI, thanks to 
its ability to make the inference process more intuitive and provide natural explanations for the reasoning process~\cite{caminada14scrutable}; its flexibility in dealing with the dynamics of the system; and its appeal in sometimes being more computationally efficient than competing approaches. Second, while some progress has been made on the use of priorities within argumentation (e.g.,~\cite{DBLP:conf/sum/AmgoudV10,DBLP:journals/ai/ModgilP13}), 
how to  represent different  approaches  for  prioritized normative reasoning  in argumentation  is  still a challenging issue.

\section{Conclusions}
In this paper we embedded three approaches to prioritized normative reasoning---namely the Greedy~\cite{Young2016}, Reduction~\cite{DBLP:conf/ijcai/Brewka89} and Optimization~\cite{DBLP:journals/aamas/Hansen08} approaches---within  the framework of a hierarchical abstract normative system. Within such a system, conditional norms are represented by a binary relation over literals, and priorities are represented by natural numbers. Hierarchical abstract normative systems provide an elegant visualisation of a normative system, with conflicts shown as two paths to a proposition and its negation. Since both conflicts and exceptions can be encoded, such systems  are inherently non-monotonic. In Dung~\cite{DBLP:journals/ai/Dung95}, the author pointed out that ``many of the major approaches to nonmonotonic reasoning in AI and logic programming are different forms of argumentation'', and inspired by this, we described how arguments can be instantiated as paths through a hierarchical abstract normative system; demonstrated that this instantiation satisfies certain desirable properties; and described how attacks and defeats between these arguments can be identified. Defeats in particular are dependent on the priorities associated with the arguments, and several different techniques have been proposed to lift priorities from argument components --- made up of norms in the context of our work --- to the arguments themselves \cite{DBLP:journals/ai/ModgilP13}. We demonstrated that for a total ordering of priorities, lifting priorities to arguments based on the weakest link principle, evaluated using the stable semantics, is equivalent to Greedy; that lifting priorities to arguments based on last link and using the stable semantics is equivalent to Reduction; and that the Optimization approach can be encoded by an argumentation system which uses weakest link together with the stable semantics, and which introduces additional defeats capturing implicit conflicts between arguments.  

%
This last result---which requires a relatively complex argumentative representation---opens up an interesting avenue for future work, namely in determining which non-monotonic logics can be easily captured through standard formal argumentation techniques, and which require additional rules or axioms in order to be represented. We note that on the argumentation side, work on bipolar argumentation (e.g., \cite{Cayrol2015}) has considered introducing additional defeats between arguments based on some notion of support, and we intend to  investigate how the additional defeats we introduced can be categorized in such frameworks.

Apart from our representation results, the use of argumentation allows us to make some useful observations, such as that Reduction will sometimes not reach any conclusions. Furthermore, argumentation can be used to provide  explanation \cite{caminada14scrutable}. When implemented, a system building on our approach can help users understand what norms they should comply with, and why. For large normative systems, the use of stable semantics to compute Reduction and Optimization results in a high computational overhead, while Greedy is computationally efficient. Ultimately, selecting the correct reasoning procedure thus requires giving consideration to both reasoning complexity, and the domain in which the system will be used.

In closing, our main observations can be summarized as follows. First, from a normative systems perspective, we know there are many many logics of prioritized rules/norms, and we consider only three here. The choice we make (Greedy, Reduction and Optimization) may seem arbitrary. However, many other examples, in particular detachment procedures not satisfying defeasible deontic detachment, are much easer to characterize, while the three throughput variants of Greedy, Reduction and Optimization can be derived from the existing results. Furthermore, these three alternatives display quite diverse behavior, and are illustrative of the various kind of approaches around. 

Second, the results we present are interesting and promising, but the work on representing prioritized rules/norms using argumentation has only begun, and there are many open issues. In particular, the restriction to totally ordered systems must be relaxed in future work.

Third, given the large number of possibilities and the vast existing literature on normative rules/norms, a different methodology is needed for dealing with prioritized rules/norms in formal argumentation. 

Finally, one may wonder why our results have not been shown before, given the long standing discussion on weakest vs last link at least since the work of Pollock \cite{pollock95cognitive}, and the central role of prioritized rules in many structured argumentation theories like ASPIC$^+$. The reason, we believe, is that it is easier to study these issues on a small fragment, like hierarchical abstract normative systems, than on a very general theory like ASPIC$^+$.

\section*{Acknowledgements}

The authors are grateful to the anonymous reviewers for their helpful comments.
The research reported in this paper was partially supported by the Fundamental Research Funds for the Central Universities of China for the project Big Data, Reasoning and Decision Making, the National Research Fund Luxembourg (FNR) under grant INTER/MOBILITY/14/8813732 for the project FMUAT: Formal Models for Uncertain Argumentation from Text, and the European Union's Horizon 2020 research and innovation programme under the Marie Sklodowska-Curie grant agreement No 690974 for the project MIREL: MIning and REasoning with Legal texts.

\bibliographystyle{splncs}

\end{document}